\documentclass[a4paper,10pt]{article}
\usepackage[utf8]{inputenc}

\usepackage{times}

\usepackage{graphicx}

\usepackage{todonotes}

\usepackage[scaled=.9]{helvet} 
\usepackage{courier}

\usepackage{nameref}
\usepackage{amssymb}
\usepackage{graphicx}
\usepackage{textcomp}
\usepackage{tikz} 

\usepackage[ruled,vlined]{algorithm2e}

\usepackage[T1,hyphens]{url}

\usepackage{color}

\newtheorem{lemma}{Lemma}
\newtheorem{theorem}{Theorem}

\newtheorem{property}{Property} 
\newtheorem{corollary}{Corollary}
\newtheorem{definition}{Definition}
\newtheorem{example}{Example}

\newenvironment{proof} {\noindent\emph{Proof:}}{$\left.\right.$\hfill$\square$}

\SetKwInput{KwAccess}{Access}

\newcommand{\rew}{\texttt{rew} }

\newcommand{\cover}{\texttt{cover} }

\newcommand{\fun}[1]{\ensuremath{\mbox{\sl #1}}}

\newcommand{\terms}[1]{\ensuremath{\fun{terms}(#1)}}
\newcommand{\vars}[1]{\ensuremath{\fun{vars}(#1)}}
\newcommand{\consts}[1]{\ensuremath{\fun{consts}(#1)}}

\newcommand{\bod}[1]{\ensuremath{\fun{body}(#1)}}
\newcommand{\head}[1]{\ensuremath{\fun{head}(#1)}}

\newcommand{\fr}[1]{\ensuremath{\fun{fr}(#1)}}

\newcommand{\sep}[1]{\ensuremath{\fun{sep}(#1)}}

\usepackage{authblk}

\title{Sound, Complete and Minimal UCQ-Rewriting for Existential Rules}
\author[1]{M\'elanie K\"onig}
\author[1]{Michel Lecl\`ere}
\author[1]{Marie-Laure Mugnier }
\author[2]{Micha\"el Thomazo\footnote{This work was done when M. Thomazo was a PhD student at University Montpellier 2.}}
\affil[1] {University Montpellier 2, France}
\affil[2] {TU Dresden, Germany}

\date{}

\begin{document}

\maketitle

\begin{abstract}
We address the issue of Ontology-Based Data Access, with ontologies represented in the framework of existential rules, also known as Datalog+/-. 
A well-known approach involves rewriting the query using ontological knowledge. 
We focus here on the basic rewriting technique which consists of rewriting the initial query into a union of conjunctive queries. 
First, we study a generic breadth-first rewriting algorithm, which takes as input any rewriting operator, and define properties of rewriting operators that ensure the correctness of the algorithm. Then, we focus on piece-unifiers, which provide a rewriting operator with the desired properties. Finally, we propose an implementation of this framework and report some experiments. 
 \end{abstract}

\section{Introduction}

We address the issue of Ontology-Based Data Access, which aims at  exploiting knowledge expressed in ontologies while querying data. 
In this paper, ontologies are represented in the framework of existential rules \cite{blms:11,kr:11}, also known as Datalog$\pm$ \cite{cali-gottlob-kifer:08,cali:09}.  
Existential rules allow one to assert the existence of new unknown individuals, which is a key feature in an open-world
perspective, where data are incompletely represented. 
These rules  are of the form \emph{body} $\rightarrow$ \emph{head}, where the body and the head are conjunctions of atoms (without functions) and variables that occur only in  the head are \emph{existentially} quantified. They generalize lightweight description logics (DLs), which form the core of the tractable profiles of OWL2.

The general query answering problem can be expressed as follows: given a knowledge base (KB) $\mathcal K$ composed of a set of facts -or data- and an ontology (a set of existential rules here), and a query $Q$, compute the set of answers to $Q$ in $\mathcal K$. In this paper, we consider Boolean conjunctive queries (Boolean CQs or BCQs). Note however that all our results are easily extended to non-Boolean conjunctive queries as well as to unions of conjunctive queries.  The fundamental problem, called BCQ entailment hereafter, can be recast as follows: given a KB $\mathcal K$ composed of facts and existential rules, and a Boolean conjunctive query $Q$, is $Q$ entailed by  $\mathcal K$?

BCQ entailment is undecidable for general existential rules. There has been an
intense research effort aimed at finding decidable subsets of rules that provide good
tradeoffs between expressivity and complexity of query answering (see e.g. \cite{rr-11-m} for a synthesis). 
With respect to lightweight DLs, these decidable rule fragments are more powerful and
flexible.  In particular, they have unrestricted predicate arity, while DLs consider unary and
binary predicates only, which allows one for a natural coupling with database schemas, in
which relations may have any arity; moreover, adding pieces of information, for instance to
take contextual knowledge into account, is made easier by the unrestricted predicate arity,
since they can be added as new predicate arguments. 

There are two main approaches to solve BCQ entailment, which are linked to the classical paradigms for processing rules, namely forward and backward chaining, schematized in Figure 1. Both can be seen as ways of reducing the problem to a classical database query answering problem by eliminating the rules. The first approach consists in applying the rules to the data, thus materializing entailed facts into the data. Then, $Q$ is entailed by $\mathcal K$ if and only if it can be mapped to this materialized database. The second approach consists in using the rules to rewrite the query into a first-order query (typically a union of conjunctive queries \cite{dl-lite:07,eswc-09-phm,icde-11-gop,dl-12-vss,kr-12-rc}) or a non-recursive Datalog program \cite{kr-10-ra, kr-12-gs}. Then, $Q$ is entailed by $\mathcal K$ if and only if the rewritten query is entailed by the initial database. 
Materialization has the advantage of enabling efficient query answering but may be not appropriate for size, data access rights or data maintenance reasons. Query rewriting has the advantage of avoiding changes in the data, however its drawback is that the rewritten query may be large, even exponential in the size of initial query, hence less efficiently processed, at least with current database techniques. Finally, techniques combining both approaches have been developed, in particular so-called combined approach \cite{lutz:09,ijcai-11-kltwz}. 

\begin{figure}
\begin{center}
\includegraphics[width=12cm]{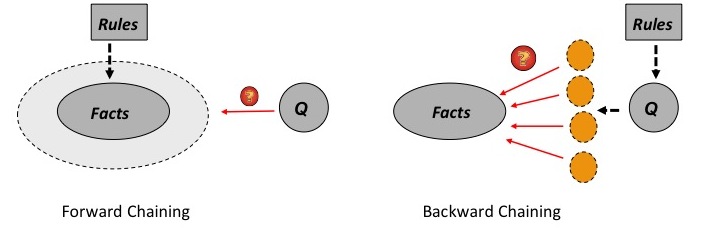} 
\caption{Forward / Backward Chaining}
\end{center}
\label{fig:FC-BC}
\end{figure}

In this paper, we focus on rewriting techniques, and more specifically on rewriting the initial conjunctive query $Q$ into a union of conjunctive queries, that we will see as a \emph{set} of conjunctive queries, called \emph{rewritings} of $Q$. While previously cited work focuses on specific rule sublanguages, we consider general existential rules.  The goal is to compute a set of rewritings both \emph{sound} (if one of its elements maps to the initial database, then $\mathcal K$ entails $Q$) and \emph{complete} (if $\mathcal K$ entails $Q$ then there is an element that maps to the initial database). Minimality may also be a desirable property. In particular, let us consider the generalization relation (a preorder) induced on Boolean conjunctive queries by homomorphism: we say that  $Q_1$ is more general than $Q_2$ if there is a homomorphism from $Q_1$ to $Q_2$; it is well-known that the existence of such a homomorphism is equivalent to the following property: for any set of facts $F$, if the answer to $Q_2$  in $F$ is positive so is the answer to $Q_1$. 
We point out that any sound and complete set of rewritings of a query $Q$ remains sound and complete when it is restricted to its most general elements. 
Since BCQ entailment is undecidable, there  is no guarantee that such a \emph{finite} set exists for a given query and general existential rules.  A set of existential rules ensuring that  a finite sound and complete set of most general rewritings exists for \emph{any} query is called a \emph{finite unification set}  (\emph{fus}) \cite{blms:11}. The \emph{fus} property is not recognizable \cite{blms:11}, but several easily recognizable \emph{fus}  classes have been exhibited in the literature: atomic-body rules \cite{blms:09}, also known as linear TGDs \cite{cali:09}, multi-linear \cite{jws-12-cgl},(join-)sticky rules \cite{cgp:10}, weakly-recursive rules \cite{d20-12-cr} and sets of rules with an acyclic graph of rule dependencies \cite{blms:09}.

\paragraph{Paper contributions.}

We start from a generic algorithm which, given a BCQ and a set of existential rules, computes a rewriting set. This task can be recast in terms of exploring a potentially infinite space of queries, composed of the initial conjunctive query and its (sound) rewritings, structured by the generalization preorder.   The algorithm explores this space in a breadth-first way, with the aim of computing a complete set of rewritings. It maintains a set of rewritings $\mathcal Q$ and iteratively performs the following tasks: (1) generate all the one-step rewritings from unexplored queries in $\mathcal Q$; (2) add these rewritings to $\mathcal Q$ and update  $\mathcal Q$ in order to keep only incomparable most general elements. We call \emph{rewriting operator} the function that, given a query and a set of rules, returns the one-step rewritings of this query. Note that it may be the case that the set of sound rewritings of the query is infinite while the set of its most general sound rewritings is finite. It follows that a simple breadth-first exploration 
of the rewriting space is not sufficient to ensure finiteness of the process, even for \emph{fus} rules; one also has to maintain a set of the most general rewritings.
This algorithm is generic in the sense that it is not restricted to a particular kind of existential rules nor to a specific rewriting operator. 

This algorithmic scheme established, we then asked ourselves the following questions:
\begin{enumerate} 
 \item Assuming that the algorithm outputs a finite sound and complete rewriting set of pairwise incomparable queries, is this set of minimal cardinality, 
 in the sense that no sound and complete set of rewritings produced by any other algorithm can be strictly smaller?
 \item At each step of the algorithm, some queries are discarded, because they are more specific than other rewritings, even if they have not been explored yet. The question is whether this dynamic pruning of the search space keeps the completeness of the output. More generally, which properties have to be fulfilled by the operator to ensure the correctness of the algorithm and its termination for \emph{fus} rules?
 \item Finally, design a rewriting operator that fulfills the desired properties and leads to the effective computation of the rewriting set.   
 \end{enumerate}

With respect to the first question, we show that all sound and complete rewriting sets restricted to their most general elements have the \emph{same} cardinality, which is minimal with respect to the completeness property. If we moreover delete redundant atoms from the obtained CQs (which can be performed by a linear number of homomorphism tests for each query), we obtain a unique minimal sound and complete set of CQs of minimal size; unicity is of course up to a bijective variable renaming. 

To answer the second question, we define several properties that a rewriting operator has to satisfy and show that these properties actually ensure the correctness of the algorithm and its halting for \emph{fus} rules. In particular, we point out that the fact that a query may be removed from the rewriting set before being explored may prevent the completeness of the output, even if the rewriting operator is theoretically able to generate a complete output. The \emph{prunability} of the rewriting operator ensures that this dynamic pruning can be safely performed. Briefly, this property holds if, for all queries $Q_1$ and $Q_2$, when $Q_1$ is more general than $Q_2$ then any one-step rewriting of $Q_2$ is less general than $Q_1$ itself or one of the one-step rewritings of  $Q_1$; intuitively, this allows to discard the rewriting $Q_2$ even when its one-step rewritings have not been generated yet. 
Note that this kind of properties ties in with an issue raised in \cite{isg:12} about the gap between theoretical completeness of some methods and the effective completeness of their implementation, this gap being mainly due to algorithmic optimizations (here the dynamic pruning).  

Concerning the third question, we proceed in several steps.  First, we rely on a specific unifier, called a \emph{piece-unifier}, that was designed for backward chaining with conceptual graph rules (whose logical translation is exactly existential rules  \cite{salvat-mugnier:96}). As in classical backward chaining, the rewriting process relies on a unification operation between the current query and a rule head. However, existential variables in rule heads induce a structure that has to be considered to keep soundness.  Thus, instead of unifying a single atom of the query at once, our unifier processes a subset of atoms from the query. We call \emph{piece} a minimal subset of atoms from the query that have to be erased together, hence the name piece-unifier.  We present below a very simple example of piece unification (in particular, the head of the existential rule is restricted to a single atom).

\begin{example}\label{ex:piece1}  Let $R =  \forall x~(q(x) \rightarrow \exists y~p(x,y))$ and the BCQ $Q = \exists u \exists v \exists w (p(u,v) \wedge p(w,v) \wedge r(u,w))$.  Assume we want to unify the atom $p(u,v)$ from $Q$ with $p(x,y)$, for instance by a substitution $\{(u,x),(v,y)\}$. Since $v$ is unified with the existential variable $y$,  all other atoms from $Q$ containing $v$ must also be considered: indeed, simply rewriting $Q$ into $Q_1 = q(x) \wedge p(w,y)  \wedge r(x,w)$ would be unsound: intuitively, the fact that the atoms $p(u,v)$ and $p(w,v)$ in $Q$ share a variable would be lost in atoms $q(x)$ and $p(w,y)$; for instance $F=q(a) \wedge p(b,c)  \wedge r(a,b)$ would answer $Q_1$ despite  $Q$ is not entailed by $F$ and $R$. Thus, $p(u,v)$ and $p(w,v)$ have to be both unified with the head of $R$, for instance by means of the following substitution: $\mu = \{(u,x), (v,y),(w,x)\}$. $\{p(
u,v),p(w,v)\}$ is called a piece. The corresponding rewriting of $Q$ 
is $q(x) \wedge r(x,x)$. 
\end{example}

Piece-unifiers lead to a logically sound and complete rewriting method. As far as we know, it is the only method accepting any kind of existential rules, while staying in this fragment, i.e., without Skolemization of rule heads to replace existential variables with Skolem functions. 

We show that the piece-based rewriting operator fulfills the desired properties ensuring the correctness of the generic algorithm and its termination in the case of \emph{fus} rules. 
The next question was how to optimize the rewriting step.  Indeed, the problem of deciding whether there is a piece-unifier between a query and a rule head is NP-complete 
  and the number of piece-unifiers can be exponential in the size of the query.   To cope with these sources of complexity, we consider so-called \emph{single-piece} unifiers, which unify a single-piece of the query at once (like $\mu$ in Example \ref{ex:piece1}).  We also focus on rules with a head restricted to an atom. This is not a restriction in terms of expressivity, since any rule can be decomposed into an equivalent set of atomic-head rules by simply introducing a new predicate for each rule (e.g. \cite{cali-gottlob-kifer:08}, \cite{blms:09}). The interesting point is that each atom in $Q$ belongs to at most one piece with respect to $R$ when $R$ has an atomic head (which is false for general existential rules). In the case of rules with atomic head, the number of (most general) single-piece unifiers of a query $Q$ with the head of a rule $R$ is bounded by the size of the query. We show that the single-piece based rewriting operator is able to generate a sound and complete set of rewritings. However, as pointed out in several examples, it is not prunable. Hence, single-piece unifiers have to be combined to recover prunability. We thus define the \emph{aggregation} of single-piece unifiers and show that the corresponding rewriting operator fulfills all desired properties and generates less queries than the piece-based rewriting operator. Detailed algorithms are given and first experiments are reported.  

\paragraph{Paper organization.}

Section \ref{sec:prelim} recalls some basic notions about the existential rule framework. Section \ref{sec:prop} defines sound, complete and minimal sets of rewritings. In Section \ref{sec:generic} the generic breadth-first algorithm is introduced and  general properties of rewriting operators are studied. Section \ref{sec:pu} presents the piece-based rewriting operator.
 In Section \ref{sec:algo}, we focus on exploiting single-piece unifiers and introduce the rewriting operator based on their aggregation. Finally, Section \ref{sec:expe} is devoted to implementation and experiments, as well as to further work.
  
 This is an extended version of papers by the same authors published at RR 2012 and RR 2013 (International Conference on Web Reasoning and Rule Systems).

\section{Preliminaries}
\label{sec:prelim}

An \emph{atom} is of the form $p(t_1, \ldots, t_k)$ where $p$ is a predicate with arity
$k$, and the $t_i$ are terms, i.e., variables or constants.
Given an atom or a set of atoms $A$, $\fun{vars}(A)$,
$\fun{consts}(A)$ and $\fun{terms}(A)$ denote its set of variables, of constants and of
terms, respectively. In all the examples in this paper, the terms are variables (denoted by
$x$, $y$, $z$, etc.).  $\models$ denotes the classical logical
consequence. Two formulas $f_1$ and $f_2$ are said to be equivalent if $f_1 \models f_2$ and
$f_2 \models f_1$. 

A \emph{fact} is an existentially closed conjunction of atoms.\footnote{We generalize
the classical notion of a fact in order to take existential variables into account.}  A
\emph{conjunctive query} (CQ) is an existentially quantified conjunction of atoms. When it
is a closed formula, it is called a \emph{Boolean} CQ (BCQ). Hence facts and BCQs have
the same logical form. In the following, we will see them as sets of atoms. 
 Given sets of atoms  $A$ and $B$, a
\emph{homomorphism} $h$ from $A$ to $B$ is a
substitution
of $\vars{A}$ by $\terms{B}$ s.t. $h(A) \subseteq B$. We say that $A$ is \emph{mapped} to
$B$ by $h$. If there is a homomorphism from $A$ to $B$, we say that $A$ is \emph{more
general} than $B$, which is denoted $A \geq B$.  

Given a fact $F$ and a BCQ $Q$, the answer to $Q$ in $F$ is \emph{positive} if
 $F \models Q$.  It is well-known that $F \models Q$ if and only if there is a homomorphism
from $Q$ to $F$.  If $Q$ is a non-Boolean CQ, let $x_1 \ldots x_q$ be the free variables in $Q$. Then, a tuple of constants $(a_1 \ldots a_q)$ is an answer to $Q$ in $F$ if there is a homomorphism from $Q$ to $F$ that maps $x_i$ to $a_i$ for each $i$. In other words, $(a_1 \ldots a_q)$ is an answer to $Q$ in $F$  if and only if  the answer to the BCQ obtained from $Q$ by substituting each  $x_i$ with $a_i$ is positive. 

In this paper, we consider only Boolean queries for simplicity reasons. This is not a restriction, since our mechanisms can actually process a CQ with free variables $x_1 \ldots x_q$  by translating it into a BCQ with an added atom $ans(x_1\ldots x_q)$, where $ans$ is a special predicate not occurring in the knowledge base. Since $ans$ can never be erased by a rewriting step, the $x_i$ can only be substituted and will not ``disappear''. We can thus compute the set of rewritings of a CQ as a Boolean CQ with a special $ans$ atom, then transform the rewritings into  non-Boolean CQs by removing the $ans$ atom and consider its arguments as free variables. 
Note that our the generic algorithm can accept as input a union of conjunctive queries as well, since it works exactly in the same way if it takes as input a set of CQs instead of a single CQ. 

\begin{definition}[Existential rule] An \emph{existential rule} (or simply a \emph{rule}) is a formula 
$R = \forall \vec{x} \forall
\vec{y}(B[\vec{x},\vec{y}] \rightarrow \exists \vec{z} H[\vec{y},\vec{z}])$ where $\vec{x},\vec{y}$ and $\vec{z}$ are tuple of variables, $B = \bod{R}$  and $H = \head{R}$ are conjunctions of atoms, resp. called the \emph{body}
and the \emph{head} of $R$. The \emph{frontier} of $R$, noted $\fr{R}$, is the set  $\vars{B} \cap \vars{H}  = \vec{y}$. 
The set of \emph{existential variables} in $R$ is the set $ \vars{H} \setminus \fr{R} = \vec{z}$. 
\end{definition}

In the following, we will omit quantifiers in rules as there is no ambiguity.

A \emph{knowledge base} (KB) $\mathcal K = (F, \mathcal R)$ is composed of a 
fact $F$ and a finite set of existential rules $\mathcal R$.
 The \emph{BCQ entailment problem} takes as input a KB $\mathcal K = (F,
\mathcal R)$ and a BCQ $Q$, and asks if $F, \mathcal R \models Q$  holds.

\section{Desirable Properties of Rewriting Sets}
\label{sec:prop}

 Given a query $Q$ and a set of existential rules $\mathcal R$, rewriting techniques compute a set of queries $\mathcal Q$, which we call a  \emph{rewriting set} hereafter. It is generally desired that such a set satisfies at least three properties:  \emph{soundness},  \emph{completeness} and \emph{minimality}.

\begin{definition}[Sound and Complete set]
\label{def-sound-complete-rewriting}
Let $\mathcal R$ be a set of existential rules and $Q$ be a BCQ. Let $\mathcal Q$
be a set of BCQs. $\mathcal Q$ is said to be \emph{sound} w.r.t. $Q$ and $\mathcal R$ if
for all facts $F$, for all $Q'\in \mathcal Q$, if $Q'$ can be mapped to $F$ then ($\mathcal R, F
\models Q$). Reciprocally, $\mathcal Q$ is said to be \emph{complete} w.r.t. $Q$ and
$\mathcal R$ if for all fact $F$, if ($\mathcal R, F \models Q$) then there is $Q' \in
\mathcal Q$ s.t. $Q'$ can be mapped to $F$.
\end{definition}

We mentioned in the introduction 
that only the \emph{most general elements} of a rewriting set need to be considered. Indeed, let $Q_1$ and $Q_2$ be two elements of a rewriting set such that $Q_1 \geq Q_2$ and let $F$ be any fact: if $Q_1$ maps to $F$, then $Q_2$ is useless; if $Q_1$ does not map to $F$, neither does $Q_2$; thus removing $Q_2$ will not undermine completeness (and it will not undermine soundness either). The output of a rewriting algorithm should thus be a minimal set of incomparable queries that ``covers'' the set of all the sound rewritings  of the initial query. 

\begin{definition}[Covering relation]
Let $\mathcal Q_1$ and  $\mathcal Q_2$ be two sets of BCQs. $\mathcal Q_1$ \emph{covers} $\mathcal Q_2$, which is denoted $\mathcal Q_1\geq \mathcal Q_2$, if for all $Q_2\in \mathcal Q_2$ there is $Q_1\in \mathcal Q_1$ with $Q_1 \geq Q_2$.
\end{definition}

\begin{definition}[Minimal set of BCQs, Cover]
Let $\mathcal Q$ be a set of BCQs. $\mathcal Q$ is said to be \emph{minimal} if there is no $Q \in \mathcal Q$ such that $(\mathcal Q \setminus\{Q\})\ \geq \ \mathcal Q$.  A \emph{cover} of $\mathcal Q$ is a minimal set $\mathcal Q^c \subseteq \mathcal Q$ such that $\mathcal Q^c\geq \mathcal Q$. 
\end{definition}

Since a cover is a minimal set, its elements are pairwise incomparable.
\begin{figure}
\begin{center}
\includegraphics[height=5cm]{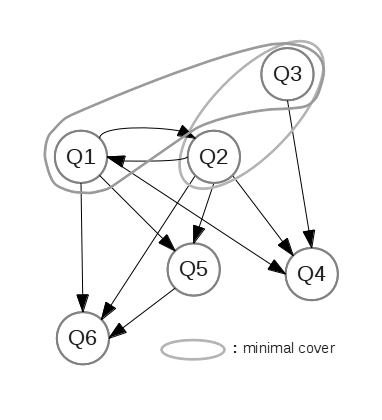}
\caption{Cover (Example \ref{ex:cover})\label{fig:cover}}
\end{center}
\end{figure}

\begin{example} 
\label{ex:cover}See also Figure \ref{fig:cover}. Let $\mathcal Q = \{Q_1, \ldots, Q_6\}$ and the following preorder over  $\mathcal Q$:
$Q_1 \geq Q_2, Q_4, Q_5, Q_6$ ;  $Q_2 \geq Q_1, Q_4, Q_5, Q_6$ ;  $Q_3 \geq Q_4$ ; $Q_5 \geq Q_6$ (note that $Q_1$ and $Q_2$ are equivalent).
There are two covers of $\mathcal Q$, namely $\{Q_1, Q_3\}$ and $\{Q_2, Q_3\}$.
\end{example}
A set of (sound) rewritings may have a finite cover even when it is infinite, as illustrated by Example \ref{ex-most-gen-infinite}.

\begin{example}
\label{ex-most-gen-infinite}
Let $Q = t(u), R_1 = t(x) \wedge p(x,y) \rightarrow r(y), R_2 = r(x) \wedge p(x,y) \rightarrow t(y)$. $R_1$ and $R_2$ have a head restricted to a single atom and no existential variable, hence the classical most general unifier can be used, which unifies the first atom in the query with the atom of a rule head. 
The set of rewritings of $Q$ with $\{R_1,R_2\}$ is infinite. The first generated queries are the following (note that rule variables are renamed when needed):\\
$Q_0 = t(u)$\\
$Q_1= r(x) \wedge p(x,y)$ // from $Q_0$ and $R_2$ with $\{(u,y)\}$\\
$Q_2 = t(x_0) \wedge p(x_0, y_0) \wedge p(y_0, y)$ // from $Q_1$ and $R_1$ with $\{(x,y_0)\}$ \\
$Q_3 = r(x_1) \wedge p(x_1, y_1) \wedge p(y_1, y_0) \wedge p(y_0, y)$ // from $Q_2$ and $R_2$ with $\{(x_0,y_1)\}$ \\
$Q_4 = t(x_2) \wedge p(x_2, y_2) \wedge p(y_2,y_1) \wedge p(y_1, y_0) \wedge p(y_0, y)$ // from $Q_3$ and $R_1$ \\
\emph{and so on} $\dots$\\
However, the set of the most general rewritings is $\{Q_0, Q_1\}$ since any other query than can be obtained is more specific than $Q_0$ or $Q_1$. 
\end{example}

It can be easily checked that all covers of a given set have the same cardinality. We now prove that this property can be extended to the covers of all sound and complete finite rewriting sets of $Q$, no matter of the rewriting technique used to compute these sets. 

\begin{theorem}\label{prop-mgr}
Let $\mathcal R$ be a set of rules and $Q$ be a BCQ. Any finite cover of a sound and complete rewriting set of $Q$ with $\mathcal R$ is of minimal cardinality (among all sound and complete rewriting sets of $Q$). 
\end{theorem}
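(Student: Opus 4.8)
The plan is to reduce the statement to a single, clean observation: \emph{any two sound and complete rewriting sets of $Q$ cover each other}. Everything else is then bookkeeping about antichains. To prove this observation, I would exploit the fact (recalled in the Preliminaries) that a BCQ and a fact have the very same logical form, so that any rewriting may itself be used as a fact. Concretely, let $\mathcal{Q}$ and $\mathcal{Q}'$ be two sound and complete sets, and take any $A \in \mathcal{Q}$. Using $A$ as a fact, the identity substitution maps $A$ to $A$, so soundness of $\mathcal{Q}$ (instantiated with $F=A$) yields $\mathcal{R}, A \models Q$. Completeness of $\mathcal{Q}'$, instantiated with the fact $F = A$, then produces some $B \in \mathcal{Q}'$ that maps to $A$, i.e.\ $B \geq A$. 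Hence $\mathcal{Q}' \geq \mathcal{Q}$, and by symmetry $\mathcal{Q} \geq \mathcal{Q}'$.

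Next I would record that a cover inherits soundness and completeness. Let $\mathcal{C}$ be a finite cover of a sound and complete set $\mathcal{Q}_1$. Soundness of $\mathcal{C}$ is immediate from $\mathcal{C} \subseteq \mathcal{Q}_1$. For completeness, given a fact $F$ with $\mathcal{R}, F \models Q$, completeness of $\mathcal{Q}_1$ gives some $A \in \mathcal{Q}_1$ with $A \geq F$; since $\mathcal{C} \geq \mathcal{Q}_1$ there is $C \in \mathcal{C}$ with $C \geq A$, and composing the two homomorphisms gives $C \geq F$. Thus $\mathcal{C}$ is itself sound and complete, and by definition of a cover its elements form an antichain for $\geq$.

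Finally I would prove minimality by building an injection, which works whether or not $\mathcal{Q}_2$ is finite. Let $\mathcal{Q}_2$ be any sound and complete rewriting set. By the observation of the first paragraph, $\mathcal{C}$ and $\mathcal{Q}_2$ cover each other. Define $f : \mathcal{C} \to \mathcal{Q}_2$ by sending each $C$ to some $D \in \mathcal{Q}_2$ with $D \geq C$ (such a $D$ exists since $\mathcal{Q}_2 \geq \mathcal{C}$). I claim $f$ is injective: if $f(C) = f(C') = D$, then $D \geq C$ and $D \geq C'$; using $\mathcal{C} \geq \mathcal{Q}_2$, pick $C'' \in \mathcal{C}$ with $C'' \geq D$, so that $C'' \geq C$ and $C'' \geq C'$ by transitivity of $\geq$. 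As $\mathcal{C}$ is an antichain, this forces $C'' = C = C'$. Therefore $|\mathcal{C}| \leq |\mathcal{Q}_2|$, which is exactly the claimed minimality.

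The hard part is entirely conceptual and lives in the first paragraph: recognising that soundness ``evaluated at the rewriting itself'' forces each rewriting, viewed as a fact, to entail $Q$, so that completeness of the other set must cover it. Once this is seen, the remaining steps are routine order-theoretic manipulations; the only points to handle carefully are the transitivity of $\geq$ (composition of homomorphisms) and the antichain property of a cover, which together guarantee injectivity. As an alternative to the injection, one could observe that $\mathcal{C}$ and a cover $\mathcal{C}_2$ of a finite $\mathcal{Q}_2$ are both covers of the sound and complete set $\mathcal{Q}_1 \cup \mathcal{Q}_2$, and then invoke the already-noted fact that all covers of a fixed set share the same cardinality.
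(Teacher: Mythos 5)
Your proof is correct, and its crucial step is exactly the one the paper relies on: instantiating the definitions of soundness and completeness at the fact $F = Q_1$ (a rewriting viewed as a fact, using the identity homomorphism), which forces any two sound and complete rewriting sets to cover each other. Where you diverge is in the counting argument. The paper takes two arbitrary sound and complete sets, passes to a finite cover of \emph{each}, and uses the mutual-covering observation twice together with the antichain property to exhibit a \emph{bijection} between the two covers that pairs equivalent queries; minimality then follows because a cover is a subset of the set it covers. You instead prove that a cover inherits soundness and completeness, and then build an \emph{injection} from the cover directly into an arbitrary competing sound and complete set $\mathcal Q_2$, using the antichain property of the cover to get injectivity. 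Your route buys a small amount of generality for free: it never needs $\mathcal Q_2$ to possess a finite cover (or even to be finite), whereas the paper's argument as written presupposes a finite cover on both sides and silently leaves the infinite case to triviality. What the paper's bijection buys in exchange is the stronger structural fact that corresponding cover elements are homomorphically \emph{equivalent}; this extra information is not needed for the theorem itself but is reused immediately afterwards to derive the corollary on the unique minimal rewriting set with elements of minimal size (cores). Your closing remark---that one could alternatively compare covers of $\mathcal Q_1 \cup \mathcal Q_2$ and invoke the fact that all covers of a fixed set are equicardinal---is also sound and is essentially a third packaging of the same idea.
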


\begin{proof}
Let $\mathcal Q_1$ and $\mathcal Q_2$ be two arbitrary sound and complete rewriting sets of $Q$ with $\mathcal R$, and $\mathcal Q_1^c$ and $\mathcal Q_2^c$ be one of their respective finite covers. $\mathcal Q_1^c$ and $\mathcal Q_2^c$ are also sound and complete, and are of smaller cardinality. We show that they have the same cardinality. Let $Q_1 \in \mathcal Q_1^c$. There exists $Q_2 \in \mathcal Q_2^c$ such that $Q_2 \geq Q_1$. If not, $Q$ would be entailed by $F= Q_1$ and $\mathcal R$ since $\mathcal Q_1^c$ is a sound rewriting set of $Q$ (and $Q_1$ maps to itself), but no elements of $\mathcal Q_2^c$ would map to $F$: thus, $Q_2^c$  would not be complete. Similarly, there exists $Q'_1 \in \mathcal Q_1^c$
 such that $Q'_1 \geq Q_2$. Then $Q'_1 \geq Q_1$, which implies that $Q'_1 = Q_1$ by assumption on $Q_1^c$. For all $Q_1 \in \mathcal Q_1^c$, there exists $Q_2 \in \mathcal Q_2^c$ such that $Q_2 \geq Q_1$ and $Q_1 \geq Q_2$. Such a $Q_2$ is unique: indeed, two such elements would be comparable for $\geq$, which is not possible by construction of $\mathcal Q_2^c$. The function associating $Q_2$ with $Q_1$ is thus a bijection from $\mathcal Q_1^c$ to $\mathcal Q_2^c$, which shows that these two sets have the same cardinality.
\end{proof}

Furthermore, the proof of the preceding theorem shows that, given any two sound and complete rewriting sets of $Q$, there is a bijection from any cover of the first set to any cover of the second set such that two elements in relation by the bijection are equivalent. However, these elements are not necessarily isomorphic  (i.e., equal up to a variable renaming) because they may contain redundancies. Consider the preorder induced by homomorphism on the set of all BCQs definable on some vocabulary. It is well-known that this preorder is such that any of its equivalence classes possesses a unique element of minimal size (up to isomorphism), called its \emph{core} (notion introduced for graphs, but easily transferable to queries). Every query can be transformed into its equivalent core by removing redundant atoms. We recall that a set of existential rules ensuring that a finite sound and complete set of most general rewritings exists for any query is called a finite unification set (\emph{fus}).\footnote{The 
finite unification set notion was first introduced in \cite{blms:09} and defined with respect to piece-unifiers. However, since piece-unifiers provide a sound and complete rewriting operator, as recalled in Section \ref{sec:pu}, and all the covers of a given set have the same cardinality,  both definitions are equivalent.}

From previous remark and Theorem  \ref{prop-mgr}, we obtain:

\begin{corollary}
Let  $\mathcal R$ be a fus and $Q$ be a BCQ. There is a unique finite sound and complete rewriting set of $Q$ with $\mathcal R$ that has both minimal cardinality and elements of minimal size. 
\end{corollary}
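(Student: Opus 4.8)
The plan is to establish existence and uniqueness separately, relying on Theorem~\ref{prop-mgr} and on the bijection exhibited in the remark that follows it.

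For existence, I would start from the \emph{fus} hypothesis: it guarantees that $Q$ admits at least one finite sound and complete rewriting set $\mathcal Q$. Taking any cover $\mathcal Q^c$ of $\mathcal Q$ yields, by Theorem~\ref{prop-mgr}, a sound and complete set of minimal cardinality whose elements are pairwise incomparable. I would then replace each query in $\mathcal Q^c$ by its core. The key observations are that (i) a query and its core are equivalent, and since soundness and completeness only depend on whether some element maps to a fact $F$ --- a property invariant under equivalence --- the resulting set is still sound and complete; and (ii) replacing elements by cores preserves the comparability relation ($Q_1 \geq Q_2$ iff $\mathrm{core}(Q_1) \geq \mathrm{core}(Q_2)$, since passing to the core does not change equivalence classes), so the new set is still a set of pairwise incomparable queries of the same cardinality. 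By construction its elements are now of minimal size, which gives a set enjoying all the required properties.

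For uniqueness, let $\mathcal Q_1$ and $\mathcal Q_2$ be two finite sound and complete rewriting sets of $Q$ that both have minimal cardinality and elements of minimal size. Each is a cover of itself: minimal cardinality forces incomparability, since removing a comparable element would produce a strictly smaller sound and complete set, contradicting minimality. I would then invoke the remark following Theorem~\ref{prop-mgr}, which provides a bijection $\beta$ between any two covers matching each element with an equivalent one. Since every element of $\mathcal Q_1$ and of $\mathcal Q_2$ is already of minimal size, it coincides with the core of its equivalence class; and two equivalent queries that are both cores are isomorphic, by unicity of the core up to variable renaming. Hence $\beta$ matches each query with an isomorphic one, so $\mathcal Q_1$ and $\mathcal Q_2$ are equal up to a bijective renaming of variables, which is exactly the desired uniqueness.

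The main obstacle is the second step of the uniqueness argument: I must be sure that the bijection delivered by the remark pairs genuinely \emph{equivalent} queries (not merely comparable ones), and then upgrade equivalence to isomorphism via the uniqueness of cores. I would also double-check the claim that passing to cores neither merges nor splits the incomparability structure, so that the cardinality is genuinely preserved throughout the existence argument.
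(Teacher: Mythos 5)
Your proof is correct and follows exactly the route the paper intends: the paper derives this corollary directly from Theorem~\ref{prop-mgr} together with the remark that the bijection between covers pairs \emph{equivalent} queries and that each equivalence class has a unique core, which is precisely the existence-via-cores and uniqueness-via-equivalent-cores argument you spell out. Your two flagged concerns (that the bijection pairs equivalent, not just comparable, queries, and that taking cores preserves the incomparability structure) are both settled affirmatively by the paper's remark and the standard properties of cores, so there is no gap.
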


\section{A Generic Breadth-First Algorithm}
\label{sec:generic}

We will now present a generic rewriting algorithm that takes as input a set of existential rules and a query, and as parameter a \emph{rewriting operator}. The studied question is the following: which properties should this operator fulfill in order that the algorithm outputs a sound, complete, finite and minimal set?

\subsection{Algorithm}

\begin{definition}[Rewriting operator]
A \emph{rewriting operator} \rew is a function which takes as input a conjunctive query $Q$ and a set of rules $\mathcal R$ and outputs a set of conjunctive queries denoted by $\rew(Q,\mathcal R)$.
\end{definition}

Since the elements of $\rew(Q,\mathcal R)$ are queries, it is possible to apply further steps of rewriting to them. This naturally leads to the notions of $k$-rewriting and $k$-saturation.

\begin{definition}[$k$-rewriting]
Let $Q$ be a conjunctive query, $\mathcal R$ be a set of rules and $\rew$ be a rewriting operator. A \emph{$1$-rewriting} of $Q$ (w.r.t. $\rew$ and $\mathcal R$) is an element of $\rew(Q,\mathcal R)$. A \emph{$k$-rewriting }of $Q$, for $k > 1$, (w.r.t. $\rew$ and $\mathcal R$) is a $1$-rewriting of a $(k-1)$-rewriting of $Q$.
\end{definition}

The term $k$-saturation is convenient to name the set of queries that can be obtained in at most $k$ rewriting steps.

\begin{definition}[$k$-saturation]
Let $Q$ be a query, $\mathcal R$ be a set of rules and $\rew$ be a rewriting operator. We denote by $\rew_k(Q,\mathcal R)$ the set of $k$-rewritings of $Q$. We call \emph{$k$-saturation}, and denote by $W_k(Q,\mathcal R)$, the set of $i$-rewritings of $Q$ for all $i \leq k$. We denote $W_\infty(Q,\mathcal R) = \bigcup_{k \in \mathbb N} W_k(Q,\mathcal R)$.
\end{definition}

In the following, we extend the notations $\rew$, $\rew_k$ and $W_k$ to a set of queries $\mathcal Q$ instead of a single query $Q$:
$\rew(\mathcal Q,\mathcal R) = \bigcup_{Q\in \mathcal Q} \rew(Q,\mathcal R)$, $\rew_k(\mathcal Q,\mathcal R) = \bigcup_{Q\in \mathcal Q} \rew_k(Q,\mathcal R)$ and $W_k(\mathcal Q,\mathcal R) = \bigcup_{i \leq k} \rew_i(\mathcal Q,\mathcal R)$.

Algorithm \ref{algo-gen-fus} performs a breadth-first exploration of the rewriting space of a given query. At each step, only the most general elements are kept thanks to a covering function, denoted by \cover, that computes a cover of a given set. For termination reasons (see the proof of Property \ref{haltProp}), we require that if both $\mathcal Q_c \cup \{q\}$ and $\mathcal Q_c \cup \{q'\}$ are covers of $\mathcal Q_F \cup \rew(\mathcal Q_E,\mathcal R)$, with $q$ and $q'$ homomorphically equivalent and $\{q\}$ belongs to $\mathcal Q_F$, then \cover does not output $\mathcal Q_c \cup \{q'\}$ -- which intuitively means that queries already explored are preferred to non-explored queries in the choice of a cover. If \rew fulfills some good properties (subsequently specified), then after the $i^{th}$ iteration of the while loop the $i$-saturation of $Q$ (with respect to $\mathcal R$ and $\rew$) is covered by $\mathcal Q_F$, while $\mathcal Q_E$ contains the queries that remain to be explored.

\begin{algorithm}[ht]
\KwData{A set of rules $\mathcal R$, a BCQ $Q$}
\KwAccess{A rewriting operator \rew, a covering function \cover}
\KwResult{A cover of the set of all the rewritings of $Q$
}
$\mathcal Q_F \leftarrow \{Q\}$; \emph{// resulting set}\\
$\mathcal Q_E \leftarrow \{Q\}$; \emph{// queries to be explored} \\
\While{$\mathcal Q_E \not = \emptyset$}
{
$\mathcal Q_C \leftarrow \cover(\mathcal Q_F \cup \rew(\mathcal Q_E,\mathcal R))$;
\emph{// update cover}\\
$\mathcal Q_E \leftarrow \mathcal Q_C \backslash \mathcal Q_F$; \emph{// select
unexplored queries}\\
$\mathcal Q_F \leftarrow \mathcal Q_C$;\\
}
\KwRet{$\mathcal Q_F$}
\caption{{\sc A generic breadth-first rewriting algorithm}} \label{algo-gen-fus}
\end{algorithm}

In the remainder of this section,  we study the conditions that a rewriting operator must meet in order that: (i) the algorithm halts  and outputs a cover of all the rewritings that can be obtained with this rewriting operator, provided that such a finite cover exists; (ii) the output cover is sound and complete.

\subsection{Correctness and Termination of the Algorithm}

We now exhibit a sufficient property on the rewriting operator that ensures that Algorithm \ref{algo-gen-fus} outputs a
cover of $W_\infty(Q,\mathcal R)$.

\begin{definition}[Prunable]\label{def-prunable}
Let $\rew$ be a rewriting operator. $\rew$ is \emph{prunable} if for any set of rules $\mathcal R$ and for all queries $Q_1,Q_2,Q'_2$ such that $Q_1 \geq Q_2$, $Q'_2 \in \rew(Q_2,\mathcal R)$ and $Q_1 \not \geq Q'_2$, there is $Q'_1 \in \rew(Q_1,\mathcal R)$ such that $Q'_1 \geq Q'_2$.
\end{definition}

Intuitively, if an operator is prunable then it is guaranteed that for every $Q_1$ more general than $Q_2$, the one-step rewritings of $Q_2$ are covered by the one-step rewritings of $Q_1$ or by $Q_1$ itself. The following lemma states that this can be generalized to $k$-rewritings for any $k$.

\begin{lemma}
\label{lemma:main}
Let $\rew$ be a prunable rewriting operator, and let $\mathcal Q_1$ and $\mathcal Q_2$ be two sets of queries. If $\mathcal Q_1 \geq \mathcal Q_2$, then $W_\infty(\mathcal Q_1,\mathcal R) \geq W_\infty(\mathcal Q_2,\mathcal R)$.
\end{lemma}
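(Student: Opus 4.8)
The plan is to reduce the statement about infinite saturations to a statement about individual rewriting steps, and then iterate the prunability property. First I would reformulate the goal $W_\infty(\mathcal Q_1,\mathcal R) \geq W_\infty(\mathcal Q_2,\mathcal R)$ as: for every query $Q_2' \in W_\infty(\mathcal Q_2,\mathcal R)$, there is a query in $W_\infty(\mathcal Q_1,\mathcal R)$ that is more general than $Q_2'$. Since every element of $W_\infty(\mathcal Q_2,\mathcal R)$ is a $k$-rewriting of some $Q_2 \in \mathcal Q_2$ for some $k$, I would fix such a $Q_2'$ together with a witnessing rewriting chain $Q_2 = Z_0, Z_1, \ldots, Z_k = Q_2'$ where each $Z_{i+1} \in \rew(Z_i,\mathcal R)$, and proceed by induction on the length $k$ of this chain.

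The natural induction hypothesis is the following per-chain claim: if $Q_1 \geq Q_2$ and $Q_2'$ is a $k$-rewriting of $Q_2$, then there is a query in $W_\infty(\{Q_1\},\mathcal R) = W_\infty(Q_1,\mathcal R)$ that is more general than $Q_2'$. The base case $k=0$ is immediate, since $Q_2' = Q_2$ and $Q_1 \geq Q_2 = Q_2'$, with $Q_1 \in W_0(Q_1,\mathcal R)$. For the inductive step, I would write $Q_2'$ as a one-step rewriting of a $(k-1)$-rewriting $Z_{k-1}$ of $Q_2$. By the induction hypothesis applied to $Z_{k-1}$, there is some $P \in W_\infty(Q_1,\mathcal R)$ with $P \geq Z_{k-1}$. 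Now I invoke prunability on the triple $(P, Z_{k-1}, Q_2')$: since $P \geq Z_{k-1}$ and $Q_2' \in \rew(Z_{k-1},\mathcal R)$, either $P \geq Q_2'$ already, or there is $P' \in \rew(P,\mathcal R)$ with $P' \geq Q_2'$. In the first case $P$ itself witnesses the claim; in the second case, since $P \in W_\infty(Q_1,\mathcal R)$ we have $P' \in \rew(P,\mathcal R) \subseteq W_\infty(Q_1,\mathcal R)$, so $P'$ witnesses the claim. Because $\geq$ is transitive and $W_\infty(Q_1,\mathcal R)$ is closed under \rew, this closes the induction.

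Finally I would assemble the pieces: taking the union over all $Q_2 \in \mathcal Q_2$ and all their $k$-rewritings shows that every element of $W_\infty(\mathcal Q_2,\mathcal R)$ is covered by some element of $\bigcup_{Q_1} W_\infty(Q_1,\mathcal R) = W_\infty(\mathcal Q_1,\mathcal R)$, which is exactly $W_\infty(\mathcal Q_1,\mathcal R) \geq W_\infty(\mathcal Q_2,\mathcal R)$. The one subtle point, where I expect the only real care is needed, is the choice of the right induction parameter and hypothesis: prunability is stated for a \emph{single} rewriting step, so the induction must peel off the last step of the chain and crucially use that $W_\infty(Q_1,\mathcal R)$ is closed under applying \rew (so that the $P'$ produced by prunability stays inside $W_\infty(Q_1,\mathcal R)$). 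The disjunction in the definition of prunability ($P \geq Q_2'$ versus the existence of $P'$) must be handled in both branches, but neither branch is computational — it is just transitivity of $\geq$ together with the closure of $W_\infty$ under \rew.
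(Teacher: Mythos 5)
Your proof is correct and follows essentially the same route as the paper: an induction on the number of rewriting steps that peels off the last step of the chain and applies prunability once, with the covered query's predecessor handled by the induction hypothesis. The only cosmetic difference is bookkeeping: the paper proves the depth-bounded invariant $W_i(\mathcal Q_1,\mathcal R) \geq \rew_i(\mathcal Q_2,\mathcal R)$, placing each witness at depth at most $i$, whereas you place witnesses in $W_\infty(Q_1,\mathcal R)$ and rely on its closure under $\rew$ — both are valid and interchangeable here.
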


\begin{proof}
   We prove by induction on $i$ that
$W_i(\mathcal Q_1,\mathcal R) \geq \rew_i(\mathcal Q_2,\mathcal R)$.\\
   For $i=0$, $W_0(\mathcal Q_1,\mathcal R) = \mathcal Q_1 \geq \mathcal Q_2 =  \rew_0(\mathcal Q_2,\mathcal R)$.\\
   For $i>0$,  for any $Q_2 \in \rew_{i}(\mathcal Q_2,\mathcal R)$, there is $Q'_2 \in \rew_{i-1}(\mathcal Q_2,\mathcal R)$ such that $Q_2 \in \rew(Q'_2,\mathcal R)$. By induction hypothesis, there is $Q'_1 \in W_{i-1}(\mathcal Q_1,\mathcal R)$ such that $Q'_1 \geq Q'_2$. $\rew$ is prunable, thus
 either $Q'_1 \geq Q_2$ or there is $Q_1 \in \rew(Q'_1, \mathcal{R})$ such that $Q_1 \geq Q_2$. Since $W_{i-1}(\mathcal Q_1,\mathcal R)$ and $\rew(Q'_1, \mathcal{R})$ are both included in $W_i(\mathcal Q_1,\mathcal R)$, we can conclude.
\end{proof}

This lemma would not be sufficient to prove the correctness of Algorithm \ref{algo-gen-fus}, as will be discussed in Section \ref{sec:single-prunability}. We need a stronger version, which checks that a query whose $1$-rewritings are covered needs not to be explored.

\begin{lemma}
\label{lemma:prunable}
Let $\rew$ be a prunable rewriting operator, and let $\mathcal Q_1$ and $\mathcal Q_2$ be two sets of queries. If $(\mathcal Q_1 \cup \mathcal Q_2) \geq \rew(\mathcal Q_1,\mathcal R)$, then $(\mathcal Q_1 \cup W_\infty(\mathcal Q_2,\mathcal R)) \geq W_\infty(\mathcal Q_1 \cup \mathcal Q_2,\mathcal R)$.
\end{lemma}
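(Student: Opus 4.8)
The statement to prove is Lemma~\ref{lemma:prunable}: assuming $\rew$ is prunable and $(\mathcal Q_1 \cup \mathcal Q_2) \geq \rew(\mathcal Q_1,\mathcal R)$, we want $(\mathcal Q_1 \cup W_\infty(\mathcal Q_2,\mathcal R)) \geq W_\infty(\mathcal Q_1 \cup \mathcal Q_2,\mathcal R)$. Intuitively, the hypothesis says the one-step rewritings of $\mathcal Q_1$ are already covered by $\mathcal Q_1 \cup \mathcal Q_2$, so we should never actually need to rewrite the queries in $\mathcal Q_1$: all their descendants are already accounted for, either directly by $\mathcal Q_1$ or by iterating $\rew$ on $\mathcal Q_2$. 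My plan is to prove, by induction on $k$, the sharper claim
\[
(\mathcal Q_1 \cup W_\infty(\mathcal Q_2,\mathcal R)) \geq W_k(\mathcal Q_1 \cup \mathcal Q_2,\mathcal R),
\]
and then conclude by taking the union over all $k$ on the right-hand side (the covering relation $\geq$ clearly passes to arbitrary unions of covered sets, since covering is an elementwise existential statement).

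\textbf{Base case and the inductive setup.} For $k=0$ we have $W_0(\mathcal Q_1 \cup \mathcal Q_2,\mathcal R) = \mathcal Q_1 \cup \mathcal Q_2$, and this is covered by $\mathcal Q_1 \cup W_\infty(\mathcal Q_2,\mathcal R)$ since $\mathcal Q_2 \subseteq W_\infty(\mathcal Q_2,\mathcal R)$ and every query maps to itself. For the inductive step, take an arbitrary $Q \in W_{k+1}(\mathcal Q_1 \cup \mathcal Q_2,\mathcal R)$ and find a query in $\mathcal Q_1 \cup W_\infty(\mathcal Q_2,\mathcal R)$ that is more general than it. If $Q \in W_k(\mathcal Q_1 \cup \mathcal Q_2,\mathcal R)$ we are done by induction, so assume $Q \in \rew_{k+1}(\mathcal Q_1 \cup \mathcal Q_2,\mathcal R)$, i.e. $Q \in \rew(P,\mathcal R)$ for some $P \in \rew_k(\mathcal Q_1 \cup \mathcal Q_2,\mathcal R) \subseteq W_k(\mathcal Q_1 \cup \mathcal Q_2,\mathcal R)$. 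By the induction hypothesis there is some $G \in \mathcal Q_1 \cup W_\infty(\mathcal Q_2,\mathcal R)$ with $G \geq P$.

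\textbf{The case analysis — main obstacle.} The crux is to promote the covering of the parent $P$ by $G$ to a covering of the child $Q$. Here I distinguish two cases according to where $G$ lives. If $G \in W_\infty(\mathcal Q_2,\mathcal R)$, then since $G \geq P$ and $Q \in \rew(P,\mathcal R)$, prunability (applied as in Lemma~\ref{lemma:main}, with $Q_1 := G$, $Q_2 := P$, $Q'_2 := Q$) gives either $G \geq Q$ directly, or a rewriting $G' \in \rew(G,\mathcal R)$ with $G' \geq Q$; in the latter case $G' \in W_\infty(\mathcal Q_2,\mathcal R)$ too (one more rewriting step applied to an element of $W_\infty(\mathcal Q_2,\mathcal R)$ stays in $W_\infty(\mathcal Q_2,\mathcal R)$), so $Q$ is covered either way. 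The delicate case is $G \in \mathcal Q_1$: here the analogous argument would produce a one-step rewriting of a query in $\mathcal Q_1$, which is \emph{not} a priori in $\mathcal Q_1 \cup W_\infty(\mathcal Q_2,\mathcal R)$. This is exactly where the hypothesis $(\mathcal Q_1 \cup \mathcal Q_2) \geq \rew(\mathcal Q_1,\mathcal R)$ must be used. Prunability again yields either $G \geq Q$ (and we are done, as $G \in \mathcal Q_1$), or some $G' \in \rew(G,\mathcal R) \subseteq \rew(\mathcal Q_1,\mathcal R)$ with $G' \geq Q$. By the hypothesis there is $H \in \mathcal Q_1 \cup \mathcal Q_2$ with $H \geq G' \geq Q$. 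If $H \in \mathcal Q_1 \cup W_\infty(\mathcal Q_2,\mathcal R)$ already, we are finished; and indeed $H \in \mathcal Q_1$ or $H \in \mathcal Q_2 \subseteq W_\infty(\mathcal Q_2,\mathcal R)$, so $H$ lies in the target set and covers $Q$ by transitivity of $\geq$. Assembling the cases, every $Q \in W_{k+1}(\mathcal Q_1 \cup \mathcal Q_2,\mathcal R)$ is covered, completing the induction and hence the lemma. I expect the genuine difficulty to be bookkeeping in the $G \in \mathcal Q_1$ case: one must be careful that applying prunability to a $\mathcal Q_1$-query does not escape the target set, and the hypothesis on $\rew(\mathcal Q_1,\mathcal R)$ is precisely the device that reins this back in.
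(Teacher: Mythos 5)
Your proof is correct and follows essentially the same route as the paper's: an induction on rewriting depth in which prunability lifts the cover of the parent to the child, with a case split on whether the covering query lies in $\mathcal Q_1$ (reined in by the hypothesis $(\mathcal Q_1 \cup \mathcal Q_2) \geq \rew(\mathcal Q_1,\mathcal R)$) or in the saturation of $\mathcal Q_2$. The only differences are bookkeeping choices (you induct on the cumulative $W_k$ and cover with the fixed set $\mathcal Q_1 \cup W_\infty(\mathcal Q_2,\mathcal R)$, whereas the paper inducts on $\rew_i$ and covers with $\mathcal Q_1 \cup W_i(\mathcal Q_2,\mathcal R)$), which do not change the substance of the argument.
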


\begin{proof}
	We prove by induction on $i$ that $\mathcal Q_1  \cup W_i(\mathcal Q_2 , \mathcal{R}) \geq rew_i(\mathcal Q_1 \cup \mathcal Q_2 , \mathcal{R})$.\\
	For $i=0$, $rew_0(\mathcal Q_1 \cup \mathcal Q_2 , \mathcal{R}) = \mathcal Q_1 \cup \mathcal Q_2  = \mathcal Q_1 \cup W_0(\mathcal Q_2, \mathcal{R})$.\\
	For $i > 0$, for any $Q_{i} \in \rew_{i}(\mathcal Q_1 \cup \mathcal Q_2,\mathcal R)$, there is $Q_{i-1} \in \rew_{i-1}(\mathcal Q_1 \cup \mathcal Q_2,\mathcal R)$ such that $Q_{i} \in \rew(Q_{i-1},\mathcal R)$. By induction hypothesis, there is $Q'_{i-1} \in \mathcal Q_1 \cup W_{i-1}(\mathcal Q_2,\mathcal R)$ such that $Q'_{i-1} \geq Q_{i-1}$. Since $\rew$ is prunable,
 either $Q'_{i-1} \geq Q_i$ or there is $Q'_{i} \in \rew(Q'_{i-1}, \mathcal{R})$ such that $Q'_i \geq Q_i$. Then, there are two possibilities:
	\begin{itemize}
	\item either $Q'_{i-1} \in \mathcal Q_1$: since $\mathcal Q_1 \cup \mathcal Q_2 \geq \rew(\mathcal Q_1,\mathcal R)$, we have $\mathcal Q_1 \cup \mathcal Q_2 \geq \{Q'_i\}$ and so $\mathcal Q_1 \cup W_i(\mathcal Q_2,\mathcal R)\geq \{Q'_i\}$. 
	\item or $Q'_{i-1} \in W_{i-1}(\mathcal Q_2,\mathcal R)$: then $Q'_i \in W_{i}(\mathcal Q_2,\mathcal R)$.
	\end{itemize}
\end{proof}

Finally, the correctness of Algorithm \ref{algo-gen-fus} is based on the following loop invariants.

\begin{property}[Invariants of Algorithm \ref{algo-gen-fus}]\label{PropInv}
Let $\rew$ be a rewriting operator. After each iteration of the while loop of Algorithm \ref{algo-gen-fus}, the following properties hold:
\begin{enumerate}
\item $\mathcal Q_E \subseteq \mathcal Q_F \subseteq W_\infty(Q,\mathcal R)$;
\item $\mathcal Q_F \geq \rew(\mathcal Q_F \setminus \mathcal Q_E,\mathcal R)$;
\item if \rew is prunable then $(\mathcal Q_F \cup W_\infty(\mathcal Q_E,\mathcal R)) \geq W_\infty(Q,\mathcal R)$;
\item for all distinct $Q, Q' \in \mathcal Q_F$, $Q\not\geq Q'$ and $Q'\not\geq Q$.
\end{enumerate}
\end{property}

\begin{proof}
Invariants are proved by induction on the number of iterations of the while loop. Below $\mathcal Q_F^i$ and $\mathcal Q_E^i$ denote the value of $\mathcal Q_F$ and $\mathcal Q_E$ after $i$ iterations.
\renewcommand{\descriptionlabel}[1]%
{\hspace{\labelsep}\emph{#1}}
\begin{description}
\item [Invariant 1:] $\mathcal Q_E \subseteq \mathcal Q_F \subseteq W_\infty(Q,\mathcal R)$.
	\begin{description}
	\item [basis:] $\mathcal Q_E^0 = \mathcal Q_F^0 = \{Q\} = W_0(Q,\mathcal R)
	\subseteq W_\infty(Q,\mathcal R)$.
	\item [induction step:]
by construction, $\mathcal Q_E^{i}\subseteq \mathcal Q_F^{i}$ and $\mathcal Q_F^{i}\subseteq \mathcal Q_F^{i-1} \cup \rew(\mathcal Q_E^{i-1},\mathcal R)$. For any $Q'\in \mathcal Q_F^{i}$ we have: either $Q'\in \mathcal Q_F^{i-1}$ and then by induction hypothesis $Q'\in W_\infty(Q,\mathcal R)$; or $Q'\in \rew(\mathcal Q_E^{i-1},\mathcal R)$ and then by induction hypothesis we have $\mathcal Q_E^{i-1} \subseteq W_\infty(Q,\mathcal R)$, which implies $Q'\in W_\infty(Q,\mathcal R)$.
	\end{description}
	
\item [Invariant 2:] $\mathcal Q_F \geq \rew(\mathcal Q_F \setminus \mathcal Q_E,\mathcal R)$.
\begin{description}
\item [basis: ] 
$\rew(\mathcal Q_F^0 \setminus \mathcal Q_E^0,\mathcal R) = \rew(\emptyset,\mathcal R) = \emptyset$ and any set covers it.
\item [induction step:]

by construction, $\mathcal Q_F^{i}\geq \mathcal Q_F^{i-1} \cup \rew(\mathcal Q_E^{i-1},\mathcal R)$; since by induction hypothesis $\mathcal Q_F^{i-1}\geq \rew(\mathcal Q_F^{i-1} \setminus \mathcal Q_E^{i-1},\mathcal R)$, we have $\mathcal Q_F^{i}\geq \rew(\mathcal Q_F^{i-1} \setminus \mathcal Q_E^{i-1},\mathcal R) \cup \rew(\mathcal Q_E^{i-1},\mathcal R) = \rew(\mathcal Q_F^{i-1},\mathcal R)$.
Furthermore, by construction, $\mathcal Q_E^i=\mathcal Q_F^{i}\setminus\mathcal Q_F^{i-1}$; thus $\mathcal Q_F^{i}\setminus\mathcal Q_E^{i}\subseteq\mathcal Q_F^{i-1}$ and so $\rew(\mathcal Q_F^{i}\setminus\mathcal Q_E^{i},\mathcal R)\subseteq \rew(\mathcal Q_F^{i-1},\mathcal R)$.
Thus $\mathcal Q_F^{i}\geq\rew(\mathcal Q_F^{i}\setminus\mathcal Q_E^{i},\mathcal R)$.
\end{description}

\item [Invariant 3:] if \rew is prunable then $(\mathcal Q_F \cup W_\infty(\mathcal Q_E,\mathcal R)) \geq W_\infty(Q,\mathcal R)$.
\begin{description}
\item [basis:] $(\mathcal Q_F^0 \cup W_\infty(\mathcal Q_E^0,\mathcal R)) = (\{Q\} \cup W_\infty(\{Q\},\mathcal R)) = W_\infty(Q,\mathcal R)$.

\item [induction step:] we first show that \textit{(i):} $(\mathcal Q_F^{i} \cup W_\infty(\mathcal Q_E^{i},\mathcal R))\geq W_\infty(\mathcal Q_F^{i},\mathcal R)$, then we prove by induction that \textit{(ii):} $W_\infty(\mathcal Q_F^{i},\mathcal R)\geq W_\infty(Q,\mathcal R)$:
 \begin{itemize}
 \item[\textit{(i)}] by construction $\mathcal Q_E^{i} \subseteq \mathcal Q_F^{i}$, thus $(\mathcal Q_F^{i}\setminus\mathcal Q_E^{i})\cup \mathcal Q_E^{i}=\mathcal Q_F^{i}$, and by Invariant 2, we have $(\mathcal Q_F^{i}\setminus\mathcal Q_E^{i})\cup \mathcal Q_E^{i} \geq \rew(\mathcal Q_F^{i} \setminus \mathcal Q_E^{i},\mathcal R)$. Lemma \ref{lemma:prunable} then  entails that $((\mathcal Q_F^{i}\setminus\mathcal Q_E^{i})\cup W_\infty(\mathcal Q_E^{i},\mathcal R)) \geq W_\infty((\mathcal Q_F^{i} \setminus  \mathcal Q_E^{i})\cup \mathcal Q_E^{i},\mathcal R)$ and we can conclude since $\mathcal Q_F^{i} = (\mathcal Q_F^{i}\setminus\mathcal Q_E^{i})\cup \mathcal Q_E^{i}$.
 \item[\textit{(ii)}] \sloppy by construction, we have $\mathcal Q_F^{i}\geq \mathcal Q_F^{i-1} \cup \rew(\mathcal Q_E^{i-1},\mathcal R)$; so, by Lemma \ref{lemma:main}, we have $W_\infty(\mathcal Q_F^{i},\mathcal R)\geq W_\infty(\mathcal Q_F^{i-1} \cup \rew(\mathcal Q_E^{i-1},\mathcal R),\mathcal R) = W_\infty(\mathcal Q_F^{i-1},\mathcal R) \cup W_\infty(\rew(\mathcal Q_E^{i-1},\mathcal R),\mathcal R)$. Moreover, $\mathcal Q_E^{i-1}\subseteq\mathcal Q_F^{i-1}\subseteq W_\infty(\mathcal Q_F^{i-1},\mathcal R)$, thus $W_\infty(\mathcal Q_F^{i},\mathcal R)\geq \mathcal Q_F^{i-1} \cup \mathcal Q_E^{i-1} \cup W_\infty(\rew(\mathcal Q_E^{i-1},\mathcal R),\mathcal R) = \mathcal Q_F^{i-1} \cup W_\infty(\mathcal Q_E^{i-1},\mathcal R)$. Using \textit{(i)}, we have $W_\infty(\mathcal Q_F^{i},\mathcal R)\geq W_\infty(\mathcal Q_F^{i-1},\mathcal R)$ and conclude by induction hypothesis.
 \end{itemize}
\end{description}

\item [Invariant 4:] for all distinct $Q, Q' \in \mathcal Q_F$, $Q\not\geq Q'$ and $Q'\not\geq Q$. Trivially satisfied thanks to the properties of \cover.
\end{description}
\end{proof}

The next property states that if \rew is prunable then Algorithm \ref{algo-gen-fus} halts for each case where $W_\infty(Q,\mathcal R)$ owns a finite cover.

\begin{property}\label{haltProp}
Let \rew be a rewriting operator, $\mathcal R$ be a set of rules and $Q$ be a query. If $W_\infty(Q,\mathcal R)$ has a finite cover and \rew is prunable then Algorithm \ref{algo-gen-fus} halts.
\end{property}

\begin{proof}
Let $\mathcal Q$ be a finite cover of $W_\infty(Q,\mathcal R)$ and let $m$ be the largest $k$ for a $k$-rewriting  in $\mathcal Q$. 
 
We thus have $W_m(Q,\mathcal R)\geq \mathcal Q \geq W_\infty(Q,\mathcal R)$.
Since the operator is prunable, we have $\mathcal Q_F^i\geq W_i(Q,\mathcal R)$ for all $i \geq 0$ (which can be proved with a straightforward induction on $i$).  Thus $\mathcal Q_F^m\geq W_\infty(Q,\mathcal R)$. 
Thus, $\rew(\mathcal Q_E^m,\mathcal R)$ is covered by $Q_F^m$, and since already explored queries are taken first for the computation of a cover, we have that $\mathcal Q_E^{m+1}=\emptyset$. Hence Algorithm \ref{algo-gen-fus} halts.
\end{proof}

\begin{theorem}\label{algoCorrect}
Let \rew be a rewriting operator, $\mathcal R$ be a set of rules and $Q$ be a query. 
 If $W_\infty(Q,\mathcal R)$ has a finite cover and \rew is prunable then Algorithm \ref{algo-gen-fus} outputs this cover (up to query equivalence). 
\end{theorem}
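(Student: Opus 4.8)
The plan is to read the conclusion off from the three results already at hand: the halting guarantee (Property \ref{haltProp}), the loop invariants (Property \ref{PropInv}), and the remark following Theorem \ref{prop-mgr} that all covers of a given set are the same up to query equivalence. Concretely, I would first show that the value of $\mathcal Q_F$ at termination is a cover of $W_\infty(Q,\mathcal R)$ in the exact sense of the \emph{cover} definition, and then argue that this pins it down up to equivalence.

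First I would invoke Property \ref{haltProp}: since $W_\infty(Q,\mathcal R)$ admits a finite cover and \rew is prunable, Algorithm \ref{algo-gen-fus} halts. Let $\mathcal Q_F$ and $\mathcal Q_E$ denote the values of the corresponding variables at termination. Because the \textbf{while} loop can only exit when its guard $\mathcal Q_E \neq \emptyset$ fails, we have $\mathcal Q_E = \emptyset$ at that point. This is the one place that needs a moment's care, since the invariants are stated "after each iteration" and I must make sure I am reading them at the exact state in which the loop stops.

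Next I would combine the invariants of Property \ref{PropInv}, all of which hold because \rew is prunable. Instantiating Invariant~3 with $\mathcal Q_E = \emptyset$ and using $W_\infty(\emptyset,\mathcal R) = \emptyset$ (the rewriting operator produces nothing from the empty set), Invariant~3 collapses to $\mathcal Q_F \geq W_\infty(Q,\mathcal R)$, so $\mathcal Q_F$ covers the whole rewriting space. Invariant~1 gives $\mathcal Q_F \subseteq W_\infty(Q,\mathcal R)$, so $\mathcal Q_F$ is a covering \emph{subset}. Finally Invariant~4 states that $\mathcal Q_F$ is an antichain for $\geq$; I would translate this into set-minimality as required by the definition of a cover, noting that if the elements of $\mathcal Q_F$ are pairwise incomparable then for every $Q' \in \mathcal Q_F$ no remaining element is $\geq Q'$, whence $(\mathcal Q_F \setminus \{Q'\}) \not\geq \mathcal Q_F$. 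Thus $\mathcal Q_F$ is a minimal covering subset of $W_\infty(Q,\mathcal R)$, i.e.\ by definition a cover of $W_\infty(Q,\mathcal R)$.

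It then remains to justify the phrase ``outputs this cover (up to query equivalence)''. Since covers are not unique in general (cf.\ Example \ref{ex:cover}), ``this cover'' should be read as ``a cover'', and I would close the gap by appealing to the remark established after Theorem \ref{prop-mgr}: any two covers of the same set have equal cardinality and are related by a bijection pairing each element with an equivalent one. Applying this to the assumed finite cover and to the computed $\mathcal Q_F$ shows the output coincides with the assumed cover up to query equivalence. I do not expect a genuine obstacle here, as the heavy lifting is done in Properties \ref{PropInv} and \ref{haltProp}; the only delicate points are recognising that $\mathcal Q_E = \emptyset$ precisely at the halting state so that Invariant~3 simplifies to $\mathcal Q_F \geq W_\infty(Q,\mathcal R)$, rephrasing the antichain property (Invariant~4) as set-minimality, and correctly invoking the uniqueness-up-to-equivalence fact for the final clause.
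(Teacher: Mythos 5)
Your proposal is correct and follows essentially the same route as the paper's own proof: halting via Property \ref{haltProp}, then Invariant~3 with $\mathcal Q_E=\emptyset$ at termination to get $\mathcal Q_F \geq W_\infty(Q,\mathcal R)$, and Invariants~1 and~4 to conclude that $\mathcal Q_F$ is a cover. Your extra step spelling out the ``up to query equivalence'' clause via the equivalence-preserving bijection between covers is left implicit in the paper but is a sound and welcome addition.
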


\begin{proof}
By Property \ref{haltProp}, Algorithm \ref{algo-gen-fus} halts. By Invariant 3 from Property \ref{PropInv}, $(\mathcal Q_F^f \cup W_\infty(\mathcal Q_E^f,\mathcal R)) \geq W_\infty(Q,\mathcal R)$ where $Q_F^f$ and $Q_E^f$ denote the final values of $Q_F$ and $Q_E$ in Algorithm \ref{algo-gen-fus}. Since $Q_E^f=\emptyset$ when Algorithm \ref{algo-gen-fus} halts, we have $\mathcal Q_F^f\geq W_\infty(Q,\mathcal R)$. Thanks to Invariants 1 and 4 from Property \ref{PropInv} we conclude that $\mathcal Q_F^f$ is a cover of $W_\infty(Q,\mathcal R)$.
\end{proof}

\subsection{Preserving Soundness and Completeness}

We consider two further properties of a rewriting operator, namely soundness and completeness, with the aim of ensuring the  soundness and completeness of the obtained rewriting set within the meaning of Definition \ref{def-sound-complete-rewriting}. 

\begin{definition}[Soundness/completeness of a rewriting operator]
Let $\rew$ be a rewriting operator. $\rew$ is sound if for any set of rules $\mathcal R$, for any query $Q$, for any $Q' \in \rew(Q,\mathcal R)$, for any fact $F$, $F \models Q'$ implies that $F,\mathcal R \models Q$. $\rew$ is complete if for any set of rules $\mathcal R$, for any query $Q$, for any fact $F$ s.t. $F,\mathcal R \models Q$, there exists $Q' \in W_\infty(Q,\mathcal R)$ s.t. $F \models Q'$.
\end{definition}

\begin{property}\label{algoSound}
If $\rew$ is sound, then the output of Algorithm \ref{algo-gen-fus} is a sound rewriting set of $Q$ and $\mathcal R$.
\end{property}

\begin{proof}
 Direct consequence of Invariant 1 from Property \ref{PropInv}.
\end{proof}

Perhaps surprisingly, the completeness of the rewriting operator is not sufficient to ensure the completeness of the output rewriting set. Examples are provided in Section \ref{sec:single-prunability}. This is due to the dynamic pruning performed at each step of   Algorithm \ref{algo-gen-fus}. Therefore the prunability of the operator is also required. 

\begin{property}\label{algoComplete}
If $\rew$ is prunable and complete, then the output of Algorithm \ref{algo-gen-fus} is a complete rewriting set of $Q$ and $\mathcal R$.
\end{property}

\begin{proof}
Algorithm \ref{algo-gen-fus} returns $Q_F$ when $Q_E$ is empty. By Invariant 3 of Property \ref{PropInv}, we know that $(\mathcal Q_F \cup W_\infty(\mathcal \emptyset,\mathcal R)) \geq W_\infty(Q,\mathcal R)$. Since $W_\infty(\mathcal \emptyset,\mathcal R)) = \emptyset $, we are sure that $\mathcal Q_F \geq W_\infty(Q,\mathcal R)$.
\end{proof}

\begin{theorem}
If \rew is a sound, complete and prunable operator, and $\mathcal R$ is a finite unification set of rules, then for any query $Q$, Algorithm \ref{algo-gen-fus} outputs
a minimal (finite) sound and complete rewriting set of $Q$ with $\mathcal R$.
\end{theorem}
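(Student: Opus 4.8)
The plan is to assemble this final theorem directly from the machinery already established, since each hypothesis has been tailored to feed one of the preceding results. The statement asks for four properties of the output: that the algorithm \emph{halts}, and that the output is \emph{sound}, \emph{complete}, and \emph{minimal}. I would dispatch them one at a time, in the order that minimizes the remaining work.

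First I would observe that, because $\mathcal R$ is a finite unification set, $W_\infty(Q,\mathcal R)$ admits a finite cover. This requires a small but essential bridging remark: the \emph{fus} property guarantees a finite sound and complete set of most general rewritings, and since \rew is sound and complete the set $W_\infty(Q,\mathcal R)$ is itself a sound and complete rewriting set of $Q$; hence by Theorem \ref{prop-mgr} any cover of $W_\infty(Q,\mathcal R)$ has minimal (finite) cardinality, and in particular is finite. With this in hand, the hypotheses of Property \ref{haltProp} are met (\rew prunable, finite cover exists), so the algorithm halts. Then Theorem \ref{algoCorrect} applies verbatim to conclude that the output $\mathcal Q_F^f$ is a cover of $W_\infty(Q,\mathcal R)$.

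Soundness and completeness of the output then follow immediately: Property \ref{algoSound} gives soundness from soundness of \rew, and Property \ref{algoComplete} gives completeness from prunability and completeness of \rew. For minimality, I would invoke Invariant 4 of Property \ref{PropInv}, which holds after every iteration and hence at termination: the elements of $\mathcal Q_F^f$ are pairwise incomparable for $\geq$. A set of pairwise incomparable queries is minimal in the sense of Definition \ref{def-prunable}'s neighboring definition (no element is covered by the rest), so the output is a minimal set. Combining this with the fact that it covers $W_\infty(Q,\mathcal R)$ and is sound and complete yields the claim.

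The only genuinely non-mechanical step is the bridging remark in the first paragraph: one must be careful that the \emph{fus} definition is phrased relative to the existence of \emph{some} finite sound and complete rewriting set, whereas here we need a finite cover of the \emph{specific} set $W_\infty(Q,\mathcal R)$ produced by our particular operator. The footnote in the excerpt already reconciles the two definitions via the equal-cardinality argument of Theorem \ref{prop-mgr}, so the reasoning is: $W_\infty(Q,\mathcal R)$ is sound and complete (by soundness/completeness of \rew), its covers all have the same cardinality as the minimal cover guaranteed by \emph{fus}, which is finite. I expect this to be the point that needs the most careful wording, while everything else is a direct citation of the already-proven properties.
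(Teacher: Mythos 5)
Your proof follows the same route as the paper's: the paper's entire argument is the observation that \emph{fus} plus soundness and completeness of \rew gives a finite cover of $W_\infty(Q,\mathcal R)$, followed by citations of Properties \ref{algoSound} and \ref{algoComplete} and Theorem \ref{algoCorrect}; your halting step (Property \ref{haltProp}) and your minimality step (Invariant 4 of Property \ref{PropInv}) are exactly what the proof of Theorem \ref{algoCorrect} already packages, since a cover is by definition a minimal set. So the assembly is correct and matches the paper.

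The one place where you go beyond the paper --- justifying the claim that $W_\infty(Q,\mathcal R)$ has a finite cover --- is also the one place where your argument, as worded, does not hold. Theorem \ref{prop-mgr} says that any \emph{finite} cover of a sound and complete rewriting set has minimal cardinality; it presupposes a finite cover rather than producing one, so it can show neither that $W_\infty(Q,\mathcal R)$ has a cover at all nor that its covers are finite (the paper simply asserts this step without proof). A correct argument: let $\mathcal S$ be the finite sound and complete rewriting set guaranteed by the \emph{fus} property. Viewing each query as a fact, soundness of $W_\infty(Q,\mathcal R)$ and completeness of $\mathcal S$ give $\mathcal S \geq W_\infty(Q,\mathcal R)$, while soundness of $\mathcal S$ and completeness of \rew give, for each $S \in \mathcal S$, some $Q_S \in W_\infty(Q,\mathcal R)$ with $Q_S \geq S$. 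The finite set $\{Q_S \mid S \in \mathcal S\}$ is a subset of $W_\infty(Q,\mathcal R)$ that covers it (by transitivity of $\geq$), and removing redundant elements from this finite set yields a finite cover. The subset condition matters here: Property \ref{haltProp} needs the cover's elements to be $k$-rewritings for some finite $k$, so one cannot take $\mathcal S$ itself as the cover.
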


\begin{proof}
If $\mathcal R$ is a fus and \rew is a sound and complete operator then $W_\infty(Q,\mathcal R)$ has a finite cover. We conclude with Properties \ref{algoSound} and \ref{algoComplete} and Theorem \ref{algoCorrect}.
\end{proof}

\section{Piece-Based Rewriting}
\label{sec:pu}


As mentioned in the introduction (and illustrated in Example \ref{ex:piece1}), existential variables in rule heads induce a structure that has to be taken into account in the rewriting mechanism.  Hence the classical notion of a unifier is replaced by that of a piece-unifier \cite{blms:11}. A piece-unifier  ``unifies'' a subset $Q'$ of $Q$ with a subset $H'$ of $\head{R}$, in the sense that  the associated substitution $u$ is such that $u(Q') = u(H')$. Given a piece-unifier, $Q$ is partitioned into ``pieces'', which are minimal subsets of atoms that must processed together. More specifically, we call \emph{cutpoints}, the variables from $Q'$ that are not unified with existential variables from $H'$ (i.e., they are unified with frontier variables or constants); then a \emph{piece} in $Q$ is a minimal non-empty subset of atoms ``glued'' by variables other than cutpoints, i.e.,  connected by a path of variables that are not cutpoints. We recall below the definition of pieces given in \cite{blms:11} (where $T$ 
corresponds to the set of cutpoints).

\begin{definition}[Piece]\cite{blms:11}
Let $A$ be a set of atoms and $T \subseteq \vars{A}$. A \emph{piece} of  $A$ according to  $T$
is a minimal non-empty subset $P$ of $A$ such that, for all $a$ and $a'$ in $A$, if $a \in P$ and
$(\vars{a} \cap \vars{a'}) \not \subseteq T$, then $a' \in P$.
\end{definition}

In this paper, we give a definition of a piece-unifier based on partitions rather than substitutions, which simplifies subsequent notions and proofs. 
For any substitution $u$ from a set of variables $E_1$ to a set of terms $E_2$ associated with a piece-unifier, it holds that  $E_1 \cap E_2 = \emptyset$. We can thus assign with $u$ a \emph{partition} $P_u$ of $E_1 \cup E_2$ such that two terms are in the same class of $P_u$  if and only if they are merged by $u$; more specifically, we consider the equivalence classes of the symmetric, reflexive and transitive closure of the following relation $\sim$:  $t \sim t'$ if $u(t)=t'$. Conversely, to a partition on a set of terms $E$, such that no class contains two constants, can be assigned a substitution $u$ obtained by selecting an element of each class with priority given to constants: let $\{e_1 \ldots e_k\}$ be a class  in the partition and $e_i$ be the selected element, then for all $e_j$ with $1 \leq j  \neq i \leq k$, we set $u(e_j) = e_i$. If we consider a total order on terms, such that constants are smaller than variables, then a unique 
substitution is  obtained by taking the smallest element in each class. We call \emph{admissible partition} a partition such that no class contains two constants. 

The set of all partitions over a given set is structured in a \emph{lattice} by the ``\emph{finer than}" relation (given two partitions $P_1$ and $P_2$, $P_1$
is finer than $P_2$, denoted by $P_1 \geq P_2$,
if every class of $P_1$ is included in a class of $P_2$).\footnote{Usually, the notation $\leq$ is used to denote the relation ``finer than''. We adopt the converse convention, which is more in line with substitutions and the $\geq$ preorder on CQs.} The \emph{join} of several partitions is obtained by making the union of their non-disjoint classes until stability. The join of two admissible partitions may be a non-admissible partition. We say that several admissible partitions are \emph{compatible} if their join is an admissible partition.  Note that if the concerned partitions are relative to the same set $E$, then their join is their \emph{greatest lower bound} in the partition lattice of $E$.
 
 The following immediate property makes a link between comparable partitions and comparable substitutions. 
 
 \begin{property} Let $P_1$ and $P_2$ be two admissible partitions over the same set such that $P_1 \geq P_2$, with associated substitutions $u_1$ and $u_2$ respectively. Then there is a substitution $u$ such that $u_2 = s \circ u_1$ (i.e., $u_1$ is ``more general'' than $u_2$). 
  \end{property} 

In the following definition of a piece-unifier, we assume that $Q$ and $R$ have disjoint sets of variables. 

\begin{definition}[Piece-Unifier, Separating Variable, Cutpoint] \label{PU}  A piece-unifier of $Q$ with $R$ is  a triple $\mu=(Q',H',P_u)$, where $Q' \neq \emptyset$, $Q' \subseteq Q$, $H' \subseteq \head{R}$ and  $P_u$ is a partition on $\terms{Q'} \cup \terms{H'}$ satisfying the three following conditions: 
\begin{enumerate}
\item $P_u$ is admissible, i.e., no class in $P_u$ contains two constants;
\item if a class in $P_u$ contains an existential variable (from $H'$) then the other terms in the class are \emph{non-separating} variables from $Q'$; we call \emph{separating variables} from $Q'$, and note $\sep{Q'}$, the variables occurring in both $Q'$ and $(Q \setminus Q')$: $\sep{Q'} = \vars{Q'} \cap \vars{Q \setminus Q'}$. 
\item let $u$ be a substitution associated with $P_u$ obtained by selecting an element in each class, with priority given to constants; then $u(H') = u(Q')$. 
\end{enumerate}
We call \emph{cutpoints}, and note $cutp(\mu)$, the variables from $Q'$ that are not unified with existential variables from $H'$ (i.e., they are unified with frontier variables or constants): $cutp(\mu) = \{ x \in \vars{Q'} ~|~ u(x) \in \fr{R} \cup \consts{Q'} \cup \consts{H'}\}$.
\end{definition}

 Condition 2 in the piece-unifier definition ensures that a separating variable in $Q'$ is necessarily a cutpoint. It follows that $Q'$ is composed of pieces: indeed, an existential variable from $H'$ is necessarily unified with a non-separating variable from $Q'$, say $x$, which ensures that all atoms from $Q'$ in which $x$ occurs are also part of $Q'$. Figure \ref{fig:pu} illustrates these notions. 

\begin{figure}
\begin{center}
\includegraphics[width=10cm]{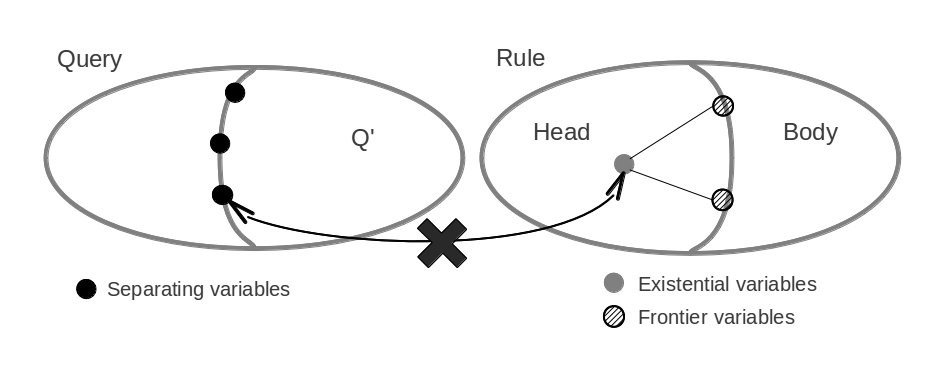}
\caption{Piece-unifier}
\end{center}
\label{fig:pu}
\end{figure}

We provide below  some examples of piece-unifiers. 

\begin{example}\label{ex:p-u1}\sloppy
Let $R = q(x) \rightarrow p(x,y)$ and  $Q = p(u,v) \wedge p(w,v) \wedge p(w,t) \wedge r(u,w)$. Let $H'=\{p(x,y)\}$. They are three piece-unifiers of $Q$ with $R$: \\
$\mu_1  = (Q'_1,H',P_u^1)$ with $Q'_1 = \{p(u,v), p(w,v)\}$ and $P_u^1= \{ \{x, u, w\}, \{y, v\} \}$\\
$\mu_2  = (Q'_2,H',P_u^2)$ with $Q'_2 = \{p(w,t)\}$ and $P_u^2= \{ \{x, w\}, \{y, t\} \}$ \\
$\mu_3  = (Q'_3,H',P_u^3)$ with $Q'_3 = \{p(u,v), p(w,v), p(w,t)\}$ and $P_u^3= \{ \{x, u, w\}, \{y, v, t\} \}$ \\
Note that $Q'_1$ and $Q'_2$ are each composed of a single piece; $Q'_3 = Q'_1 \cup Q'_2$ and  $P_u^3$ is the join of $P_u^1$ and $P_u^2$. 
\end {example}

In the previous example, $R$ has an atomic head, thus a piece-unifier of $Q'$ with $R$ actually unifies the atoms from $Q'$ and the head of $R$ into a single atom. In the general case, a piece-unifier unifies $Q'$  and a subset $H'$ of $\head{R}$ into a set of atoms, as illustrated by the next example.

\begin{example}\label{ex:p-u2}
Let  $R = q(x) \rightarrow p(x,y) \wedge p(y,z) \wedge p(z,t) \wedge r(y)$ and $Q = p(u,v) \wedge  p(v,w) \wedge r(u)$. A piece-unifier of $Q$ with $R$ is $\mu_1=(Q'_1,H'_1, P_u^1)$ with $Q'_1=\{p(u,v), p(v,w)\}$, $H'_1=\{p(x,y),p(y,z)\}$ and $P_u^1 = \{\{x, u\},\{v, y\},\{w, z\}\}$.  Another piece-unifier is $\mu_2=(Q'_2,H'_2,P_u^2)$ with $Q'_2=Q$, $H'_2=\{p(y,z),p(z,t),r(y)\}$ and $P_u^2 = \{\{u, y\},\{v, z\},\{w, t\}\}$.\\
Note that $\mu_3 = (Q'_3,H'_3, P_u^3)$ with $Q'_3 = \{p(u,v)\}$, $H'_3 = \{p(x,y)\}$ and $P_u^3 = \{\{x, u\},\{v, y\}\}$ is not a piece-unifier because the second condition in the definition of piece-unifier is not fulfilled: $v$ is a separating variable and is matched with the existential variable $y$.
\end{example} 

Then, the notions of a one-step rewriting based on a piece-unifier and of a rewriting obtained by a sequence of one-step rewritings are defined in the natural way. 

\begin{definition}[One-step Piece-Rewriting]\sloppy 
Given a piece-unifier $\mu = (Q',H',P_u)$ of $Q$ with
$R$, the \emph{one-step piece-rewriting} of $Q$ according to $\mu$, denoted $\beta(Q,R,\mu)$, is the BCQ $u(\bod{R}) \cup u(Q \setminus Q')$, where $u$ is a substitution associated with $P_u$. 
\end{definition}

We thus define inductively a \emph{$k$-step piece-rewriting} as a $(k-1)$-step piece rewriting of a one-step piece-rewriting. For any $k$, a $k$-step piece-rewriting of $Q$ is a \emph{piece-rewriting} of $Q$.

The next theorem states that piece-based rewriting is logically sound and complete. 

\begin{theorem}[basically \cite{salvat-mugnier:96}; see also \cite{blms:11}]
\label{theo:sc}
Let $\mathcal K = (F, \mathcal R)$ be a KB and $Q$ be a BCQ. Then $F, \mathcal R
\models Q$ iff there is $Q'$ a piece-rewriting of $Q$ such that $Q'\geq F$.
\end{theorem}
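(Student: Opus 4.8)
The plan is to prove the soundness and completeness of piece-based rewriting as a biconditional, treating each direction separately. The soundness direction ($\Leftarrow$) asks us to show that if some piece-rewriting $Q'$ of $Q$ satisfies $Q' \geq F$, then $F, \mathcal R \models Q$. The completeness direction ($\Rightarrow$) asks us to show that if $F, \mathcal R \models Q$, then some piece-rewriting $Q'$ of $Q$ maps to $F$. Since the theorem is attributed to \cite{salvat-mugnier:96} and \cite{blms:11}, I expect the cleanest route is to reduce both directions to the correspondence between backward-chaining steps (one-step piece-rewritings) and forward-chaining steps (rule applications on a canonical model), so that the fundamental link is a single one-step lemma relating a piece-rewriting to a rule application.

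\textbf{Soundness direction.}
First I would establish the one-step soundness claim: if $\mu = (Q',H',P_u)$ is a piece-unifier of $Q$ with $R$ and $Q_1 = \beta(Q,R,\mu) = u(\bod{R}) \cup u(Q \setminus Q')$, then for every fact $F$, $F \models Q_1$ implies $F, R \models Q$. The argument is to take a homomorphism $h$ from $Q_1$ to $F$; applying $R$ to the image $h(u(\bod{R}))$ produces (after a suitable renaming of existential variables) atoms into which $h \circ u$ maps the erased piece $Q'$, because condition 3 of Definition \ref{PU} guarantees $u(Q') = u(H')$ and condition 2 guarantees that the existential variables of $H'$ are matched only with non-separating variables of $Q'$, so the frontier-part of the unification is respected by the rule application. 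Combining this extended homomorphism with $h$ on $u(Q\setminus Q')$ yields a homomorphism from $Q$ into the saturation of $F$ by one application of $R$, hence $F, R \models Q$. Soundness for a $k$-step piece-rewriting then follows by a straightforward induction on $k$, using that $F, \mathcal R \models Q_{i}$ whenever $F, \mathcal R \models Q_{i+1}$, i.e. chaining the one-step result through the saturation.

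\textbf{Completeness direction.}
This is the harder half. The standard tool is the \emph{canonical model} (chase) $F^\ast$ obtained by saturating $F$ with all applicable rules of $\mathcal R$, which is a universal model: $F, \mathcal R \models Q$ iff there is a homomorphism from $Q$ into $F^\ast$. Assuming such a homomorphism exists, I would argue by induction on the rank (the derivation depth in the chase) of the atoms hit by the homomorphism. If the image of $Q$ lies entirely in $F$ itself, then $Q \geq F$ and $Q$ is its own $0$-step piece-rewriting, so we are done. Otherwise, pick an atom of highest rank in the image; it was produced by some rule application of a rule $R$, and the existential variables it introduced are precisely the ``fresh'' terms that the homomorphism maps $Q$-variables onto. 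The crux is to show that the subset $Q'$ of $Q$ mapped onto atoms created by this rule application, together with the induced partition, forms a genuine piece-unifier --- in particular that condition 2 holds, i.e. every separating variable of $Q'$ is a cutpoint. This holds precisely because a variable of $Q$ mapped to a freshly-introduced existential term cannot also occur outside $Q'$ (such a term appears nowhere else in $F^\ast$), which is exactly the structural constraint encoded by the ``piece'' notion. Performing the one-step piece-rewriting along this unifier yields a query $Q_1$ whose image lies strictly lower in the chase, and the induction closes.

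\textbf{Main obstacle.}
The delicate point, and where I would spend the most care, is the completeness induction: verifying that the homomorphism into the chase decomposes correctly so that the reverse of a chase step is exactly captured by a piece-unifier, and that the rank of the image genuinely decreases. One must handle the fact that a single rule application may introduce several existential variables shared across atoms, so the erased subset $Q'$ must be closed under the piece relation (atoms glued by non-cutpoint variables), which is what forces the piece-based formulation rather than a single-atom unifier. I would treat existential variables introduced by one rule application as the ``frontier'' of the recursion and make explicit the renaming isolating that application, so that the induction measure (maximal rank of an image atom, with ties broken by the number of maximal-rank atoms) strictly decreases at each step.

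$\left.\right.$
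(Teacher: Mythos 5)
The paper never proves Theorem \ref{theo:sc}: it imports the result from the two cited works (Salvat--Mugnier; Baget et al.), so there is no in-paper proof to compare your attempt against, and your proposal has to stand on its own. It does. What you reconstruct is essentially the standard forward/backward-chaining duality argument from those works: for soundness, a homomorphism $h$ from $\beta(Q,R,\mu)$ to $F$ yields a rule application of $R$ to $F$ via $h \circ u$ on $\bod{R}$, and conditions 3 and 2 of Definition \ref{PU} let you extend $h \circ u$ to a homomorphism from all of $Q$ into the enlarged fact (cutpoint variables go where $h\circ u$ sends them, variables unified with an existential variable go to the corresponding fresh term); iterating through the saturation handles $k$-step rewritings. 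For completeness, you invert the last relevant chase step: taking a homomorphism $Q \to F^\ast$ and a maximal-rank atom in its image, the preimage $Q'$ of that rule application's output, with the partition induced by equality of images, is a piece-unifier, and the one-step rewriting maps strictly lower in the chase. The decisive observation --- that a query variable sent to a freshly created existential term can only occur in atoms mapped into that application's output, which is exactly what condition 2 requires --- is correctly identified and placed where the argument needs it.

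One imprecision should be fixed in the completeness half. You justify condition 2 by asserting that a freshly-introduced existential term ``appears nowhere else in $F^\ast$''. That is false as stated: later chase steps can reuse such a term (it may lie in the frontier image of subsequent rule applications), so it generally occurs in many atoms of $F^\ast$. What is true --- and what your choice of a maximal-rank image atom already buys you --- is that the fresh term occurs in no atom of rank at most $m$ outside the batch $A$ created by the chosen application; since the image of $Q$ contains no atom of rank greater than $m$, every atom of $Q$ containing such a variable is mapped into $A$ and hence belongs to $Q'$, so its variables mapped to fresh terms are non-separating. With that restatement the unifier is legitimate, the rewritten query $u(\bod{R}) \cup u(Q \setminus Q')$ maps into $F_{m-1}$, and the maximal rank strictly decreases, so your induction closes (the tie-breaking clause in your measure is then unnecessary).
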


It follows from Theorem \ref{theo:sc} that a sound and complete rewriting operator can be based on 
piece-unifiers: we call \emph{piece-based rewriting operator}, the rewriting operator that, given $Q$ and $\mathcal R$,  outputs all the one-step piece-rewritings of $Q$ according to a piece-unifier  of $Q$ with $R \in \mathcal R$. We denote it by $\beta(Q,\mathcal R)$. 

Actually, as detailed hereafter, only most general piece-unifiers are to be considered, since the other piece-unifiers produce more specific queries. 

\begin{definition}[Most General Piece-Unifier]
Given two piece-unifiers defined on the same subsets of a query and a rule head,  $\mu_1=(Q',H',P_{u}^1)$ and $\mu_2=(Q',H',P_{u}^2)$, we say that $\mu_1$ is \emph{more general than} $\mu_2$ (notation $\mu_1 \geq \mu_2$) if
$P_{u}^1$ is finer than $P_{u}^2$
(i.e., $P_{u}^1 \geq P_{u}^2$). A piece-unifier $\mu=(Q',H',P_{u})$ is called a \emph{most general} piece-unifier if it is more general than all piece-unifiers on $Q'$ and $H'$. 
\end{definition}

\begin{property}
Let $\mu_1$ and  $\mu_2$ be two piece-unifiers with $\mu_1\geq \mu_2$. Then $\mu_1$ and $\mu_2$
have the same pieces.
\end{property}

\begin{proof}  
	 $\mu_1$ and $\mu_2$ have the same pieces iff they have the same cutpoints. It holds that $cutp(\mu_1)\subseteq cutp(\mu_2)$ since every class from $P^1_{u}$ is included in a class from $P^2_{u}$: hence a variable from $Q'$ that is in the same class as a frontier variable or a constant in $P^1_{u}$ also is in $P^2_{u}$.
	It remains to prove that $cutp(\mu_2)\subseteq cutp(\mu_1)$. Let $x$ be a cutpoint of $\mu_2$ and $P^2_{u}(x)$ be the class of $x$ in $P^2_{u}$.  Since $x$ is a cutpoint of $\mu_2$, there is a term $t$ in $P^2_{u}(x)$ that is a constant or a frontier variable. Since $P^1_{u} \geq P^2_{u}$, we know that $P^1_{u}(x) \subseteq P^2_{u}(x)$.  Let $t'$ be a term of $H'$ from $P^1_{u}(x)$ (there is at least one term of $H'$ and one term of $Q'$ in each class since the partition is part of a unifier of $H'$ and $Q'$). We are sure that $t'$ is not an existential variable because $t' \in P^2_{u}(x)$ too and an existential variable cannot be in the same class as $t$ (Condition 2 in the definition of a piece-unifier), so $t'$ is a frontier variable or a constant, hence $x$ is a cutpoint of $\mu_1$.
\end{proof}

\begin{property}\label{prop:CompUnif}
Let $\mu_1 = (Q',H',P^1_{u})$ and $\mu_2 = (Q',H',P^2_{u})$ be two piece-unifiers such that $\mu_1 \geq
\mu_2$. Then $\beta(Q, R, \mu_1) \geq \beta(Q, R, \mu_2)$.
\end{property}

\begin{proof} 
	Let $u_1$ (resp. $u_2$) be a substitution associated with $P^1_{u}$ (resp. $P^2_{u}$). Since $P^1_{u} \geq P^2_{u}$, there is a substitution $s$ such that $u_2 = s \circ u_1$ . Then $\beta(Q,R,\mu_2) = u_2(\bod{R}) \cup u_2(Q \setminus Q')$  
	$= (s \circ u_1)(\bod{R}) \cup (s \circ u_1)(Q \setminus Q') =  (s \circ u_1)(\bod{R} \cup (Q \setminus Q')) = s(u_1 (\bod{R} \cup (Q \setminus Q'))) = s(\beta(Q,R,\mu_1))$. $s$ is thus a homomorphism from $\beta(Q,R,\mu_1)$ to $\beta(Q,R,\mu_2)$, hence $\beta(Q,R,\mu_1) \geq \beta(Q,R,\mu_2)$.
\end{proof}

\smallskip 
The following lemma expresses that the piece-based rewriting operator is prunable. 

\begin{lemma}\label{propCompRewifCompQ}
If $Q_1 \geq Q_2$ then for any piece-unifier $\mu_2$ of $Q_2$ with $R$:  either (i) $Q_1
\geq \beta(Q_2, R, \mu_2)$ or (ii) there is a piece-unifier $\mu_1$ of $Q_1$ with $R$ such
that $\beta(Q_1, R, \mu_1) \geq \beta(Q_2, R, \mu_2)$.
\end{lemma}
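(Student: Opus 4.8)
The plan is to exploit the homomorphism $h$ witnessing $Q_1 \geq Q_2$ (so $h(Q_1) \subseteq Q_2$) in order to transport the given piece-unifier $\mu_2 = (Q'_2, H', P_u)$, with associated substitution $u$, backwards along $h$. The natural way to split $Q_1$ is according to where $h$ sends each atom: I would set $Q'_1 = \{a \in Q_1 \mid h(a) \in Q'_2\}$. Note that if $a \in Q'_1$ then $h(\terms{a}) \subseteq \terms{Q'_2}$, so $h$ restricts to a map $\terms{Q'_1} \to \terms{Q'_2}$. If $Q'_1 = \emptyset$, then $h(Q_1) \subseteq Q_2 \setminus Q'_2$, and composing with $u$ shows that $u \circ h$ maps $Q_1$ into $u(Q_2 \setminus Q'_2) \subseteq \beta(Q_2, R, \mu_2)$; this yields alternative (i). So the interesting case is $Q'_1 \neq \emptyset$, where I aim to construct a piece-unifier $\mu_1$ of $Q_1$ satisfying (ii).

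To build $\mu_1$, I would first define a map $\phi$ on $\terms{Q'_1} \cup \terms{H'}$ that agrees with $h$ on $\terms{Q'_1}$ and is the identity on $\terms{H'}$, and then pull back $P_u$ along $\phi$: two terms are declared equivalent exactly when their $\phi$-images lie in the same class of $P_u$. The crucial bookkeeping is the identity $u_1(a) = u_1(b) \iff u(h(a)) = u(b)$ for atoms $a$ (over $\terms{Q'_1}$) and $b$ (over $\terms{H'}$), which holds because $u_1(t) = u_1(t')$ iff $\phi(t), \phi(t')$ are $P_u$-equivalent iff $u(\phi(t)) = u(\phi(t'))$. Here I expect the main obstacle: one cannot keep all of $H'$, because $h$ need not be surjective onto $Q'_2$, and the head atoms unifying with the unreached atoms of $Q'_2$ would spoil the equality $u_1(Q'_1) = u_1(H')$. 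I would therefore restrict the head part to $H'_1 = \{b \in H' \mid u(b) \in u(h(Q'_1))\}$, take $P^1_u$ to be the pulled-back partition restricted to $\terms{Q'_1} \cup \terms{H'_1}$, and set $\mu_1 = (Q'_1, H'_1, P^1_u)$.

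The second delicate point is the verification of the three piece-unifier conditions for $\mu_1$. Admissibility is immediate, because constants are fixed by $\phi$, so two constants co-classed in $P^1_u$ would already be co-classed (hence inadmissible) in $P_u$. Condition~3, $u_1(Q'_1) = u_1(H'_1)$, follows from the bookkeeping identity together with the definition of $H'_1$ and the equality $u(Q'_2) = u(H')$. The real work is Condition~2: I must show that if a class of $P^1_u$ contains an existential variable $z$, then every other term of that class is a non-separating variable of $Q'_1$. The key sub-claim is that $h$ preserves separating status, namely if $t \in \sep{Q'_1}$ then $h(t) \in \sep{Q'_2}$ (an atom of $Q_1 \setminus Q'_1$ containing $t$ is sent by $h$ into $Q_2 \setminus Q'_2$, while an atom of $Q'_1$ containing $t$ is sent into $Q'_2$). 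Since in $P_u$ the class of $z$ contains only non-separating variables of $Q'_2$, a separating $t$ of $Q'_1$ co-classed with $z$ would force $h(t)$ to be a separating variable of $Q'_2$ lying in $z$'s $P_u$-class, a contradiction.

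Finally, for alternative (ii) I would exhibit a homomorphism $s$ from $\beta(Q_1, R, \mu_1) = u_1(\bod{R}) \cup u_1(Q_1 \setminus Q'_1)$ to $\beta(Q_2, R, \mu_2) = u(\bod{R}) \cup u(Q_2 \setminus Q'_2)$, defined on representatives by $s(u_1(v)) = u(v)$ for $v \in \terms{\bod{R}}$ and $s(u_1(v)) = u(h(v))$ for $v \in \terms{Q_1 \setminus Q'_1}$. The only well-definedness check concerns terms described both ways, i.e.\ frontier variables of $\bod{R}$ merged with separating variables of $Q'_1$; there the two clauses agree precisely because such terms are $P_u$-equivalent to their $h$-images, so $u(v) = u(h(v'))$. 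With $s$ in hand, $s(u_1(\bod{R})) = u(\bod{R})$, and using that $a \in Q_1 \setminus Q'_1$ implies $h(a) \in Q_2 \setminus Q'_2$, we get $s(u_1(Q_1 \setminus Q'_1)) \subseteq u(Q_2 \setminus Q'_2)$; hence $\beta(Q_1, R, \mu_1) \geq \beta(Q_2, R, \mu_2)$, establishing (ii).
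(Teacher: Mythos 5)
Your proof is correct and follows essentially the same route as the paper's: the same case split (alternative (i) when no atom of $Q_1$ is sent into $Q'_2$), the same construction of $\mu_1$ with $Q'_1$ as the $h$-preimage of $Q'_2$, $H'_1$ as the head atoms matched with $u(h(Q'_1))$, and the partition pulled back along $h$, and the same representative-wise homomorphism from $\beta(Q_1,R,\mu_1)$ to $\beta(Q_2,R,\mu_2)$. Your explicit verification of Condition 2 (via the observation that $h$ maps separating variables of $Q'_1$ to separating variables of $Q'_2$) is a welcome elaboration of a step the paper dismisses with ``$P_u^1$ fulfills all the conditions since $P_u^2$ fulfills them.''
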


\begin{proof}
Let $h$ be a homomorphism from $Q_1$ to $Q_2$. Let $\mu_2=(Q'_2,H'_2,P_u^2)$ be a piece-unifier of $Q_2$ with $R$, and let $u_2$ be a substitution associated with $P_u^2$. We consider two cases:

\begin{itemize}
\item[(i)] If $h(Q_1)\subseteq (Q_2 \setminus Q'_2)$, then $u_2 \circ h$ is a homomorphism from $Q_1$ to $u_2(Q_2\setminus Q'_2) \subseteq \beta(Q_2, R, \mu_2)$. Thus $Q_1 \geq \beta(Q_2, R, \mu_2)$.

\item[(ii)] Otherwise, let $Q'_1$ be the non-empty subset of $Q_1$ mapped by $h$ to
$Q'_2$, i.e., $h(Q'_1) \subseteq Q'_2$, and $H'_1$ be the subset of $H'_2$ matched by $u_2$ with $u_2(h(Q'_1))$, 
i.e., $u_2(H'_1) = u_2(h(Q'1))$. Let $P_u^1$ be the partition on $terms(H'_1) \cup terms(Q'_1)$ such that two terms are in the same class of $P_u^1$ if these terms or their images by $h$ are in the same class of $P_u^2$ (i.e., for a term $t$, we consider $t$ if $t$ is in $Q'_1$, and $h(t)$ otherwise). By construction, $(Q'_1,H'_1,P_u^1)$ is a piece-unifier of $Q_1$ with $R$. Indeed, $P_u^1$ fulfills all the conditions of the  piece-unifier definition since $P_u^2$ fulfills them.

Let $u_1$ be a substitution associated with $P_u^1$. For each class $P$ of $P_u^1$ (resp. $P_u^2$), we call selected element the unique element $t$ of $P$ such that $u_1(t)=t$ (resp. $u_2(t)=t$).  
We build a substitution $s$ from the selected elements of the classes in $P_u^1$ which are variables to the selected elements of the classes in $P_u^2$ as follows:
for any class $P$ of $P_u^1$, let $t$ be the selected element of $P$: if $t$ is a variable of $H'_1$ then $s(t) = u_2(t)$ , otherwise $s(t) = u_2(h(t))$ ($t$ occurs in $Q'_1$). Note that for any term $t$ in $P_u^1$ we have $s(u_1(t))=u_2(h(t))$.  
te that by construction of $P_u^1$, we have that for all $x \in vars(Q'_1) \cup vars(H'_1)$ $h'(u_1(x)) = u_2(h(x))$.

We build now a substitution $h'$  from $\vars{\beta(Q_1,R,\mu_1)}$ to $\terms{\beta(Q_2,R,\mu_2)}$, by considering three cases according to the part of $\beta(Q_1,R,\mu_1)$ in which the variable occurs (in $Q_1$ but not in $Q'_1$, in $\bod{R}$ but not in $H'_1$, or in the remaining part corresponding to the images of $sep(Q'_1)$ by $u_1$):
	\begin{enumerate}
		\item if $x \in \vars{Q_1} \setminus \vars{Q'_1}$, $h'(x) = h(x)$;
		\item if $x \in \vars{\bod{R}} \setminus \vars{H'_1}$, $h'(x) = u_2(x)$; 
		\item if $x \in u_1(\sep{Q'_1})$(or alternatively $x \in u_1(\fr{R}\cap \vars{H'_1})$), $h'(x) = s(x)$ ;
	\end{enumerate}

 We conclude by showing that $h'$ is a homomorphism from $\beta(Q_1,R,\mu_1) = u_1(\bod{R}) \cup u_1(Q_1 \setminus Q'_1)$ to $\beta(Q_2,R,\mu_2) = u_2(\bod{R}) \cup u_2(Q_2 \setminus Q'_2)$ with two points:
 \begin{enumerate}
 	\item $h'(u_1(\bod{R})) = u_2(\bod{R})$. Indeed, for any variable $x$ of $\bod{R}$:
\begin{itemize}
	\item either $x \in \vars{\bod{R}} \setminus \vars{H'_1}$, so $h'(u_1(x)) = h'(x) = u_2(x)$ ($u_1$ is a substitution from variables of $Q'_1\cup H'_1$), 
	\item or $x \in \fr{R} \cap \vars{H'_1}$, so $h'(u_1(x)) = s(u_1(x))=u_2(h(x))=u_2(x)$ ($h$ is a substitution from variables of $Q_1$).
\end{itemize}
 	\item $h'(u_1(Q_1 \setminus Q'_1)) \subseteq u_2(Q_2 \setminus Q'_2)$. We show that $h'(u_1(Q_1 \setminus Q'_1)) = u_2(h(Q_1 \setminus Q'_1)))$ and since $h(Q_1 \setminus Q'_1) \subseteq Q_2 \setminus Q'_2$, we have $h'(u_1(Q_1 \setminus Q'_1)) \subseteq u_2(Q_2\setminus Q'_2)$. To show that $h'(u_1(Q_1 \setminus Q'_1)) = u_2(h(Q_1 \setminus Q'_1)))$, just see that for any variable $x$ from $Q_1 \setminus Q'_1$:
\begin{itemize}
	\item either $x \in \vars{Q'_1}$, then $h'(u_1(x)) = s(u_1(x)) = u_2(h(x))$
	\item or $x \in \vars{Q_1} \setminus \vars{Q'_1}$, then $h'(u_1(x)) = h'(x)=h(x)=u_2(h(x))$ ($u_1$ is a substitution from variables of $Q'_1\cup H'_1$ and $u_2$ is a substitution from variables of $Q'_2\cup H'_2$ and $h(x)\not\in \vars{Q'_2\cup H'_2}$).
\end{itemize} 
 \end{enumerate}
 \end{itemize}
\end{proof}

We are now able to show that the piece-based rewriting operator fulfills all the desired properties introduced in section \ref{sec:generic}. 

\begin{theorem} Piece-based rewriting operator is sound, complete and prunable; this property is still true if only most general piece-unifiers are considered.
\end{theorem}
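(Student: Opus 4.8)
The plan is to prove the three properties --- soundness, completeness, and prunability --- separately, since each corresponds to a different earlier result, and then to argue that restricting to most general piece-unifiers preserves all three.

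\medskip

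\noindent\textbf{Soundness and completeness.}
These follow almost directly from Theorem \ref{theo:sc}. For soundness, I would take any $Q' \in \beta(Q,\mathcal R)$, so $Q'$ is a one-step piece-rewriting of $Q$ using some piece-unifier with a rule $R \in \mathcal R$. Hence $Q'$ is in particular a piece-rewriting of $Q$, and for any fact $F$ with $F \models Q'$ we have $Q' \geq F$ (homomorphism from $Q'$ to $F$), so by the ``if'' direction of Theorem \ref{theo:sc} we get $F, \mathcal R \models Q$. For completeness, I would take any fact $F$ with $F, \mathcal R \models Q$; the ``only if'' direction of Theorem \ref{theo:sc} yields a piece-rewriting $Q'$ of $Q$ with $Q' \geq F$, i.e.\ $F \models Q'$. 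Since every piece-rewriting is obtained by a finite sequence of one-step piece-rewritings, $Q' \in W_\infty(Q,\mathcal R)$, which is exactly the completeness condition for the operator. The only subtlety is to check that the inductive definition of $W_\infty$ from Section \ref{sec:generic} coincides with the notion of $k$-step piece-rewriting used in Theorem \ref{theo:sc}; this is immediate since a $1$-rewriting of $Q$ for the operator $\beta$ is by definition a one-step piece-rewriting.

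\medskip

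\noindent\textbf{Prunability.}
This is the heart of the argument, and it is Lemma \ref{propCompRewifCompQ} stated at the operator level. I would verify that Lemma \ref{propCompRewifCompQ} is exactly the prunability condition of Definition \ref{def-prunable} specialized to $\rew = \beta$ and to a single rule. Concretely, given queries $Q_1, Q_2, Q'_2$ with $Q_1 \geq Q_2$, $Q'_2 \in \beta(Q_2,\mathcal R)$ and $Q_1 \not\geq Q'_2$, the element $Q'_2$ is a one-step piece-rewriting $\beta(Q_2,R,\mu_2)$ for some $R \in \mathcal R$ and some piece-unifier $\mu_2$. Lemma \ref{propCompRewifCompQ} gives either $Q_1 \geq \beta(Q_2,R,\mu_2) = Q'_2$, which is excluded by hypothesis, or a piece-unifier $\mu_1$ of $Q_1$ with $R$ such that $\beta(Q_1,R,\mu_1) \geq \beta(Q_2,R,\mu_2) = Q'_2$. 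The latter gives $Q'_1 := \beta(Q_1,R,\mu_1) \in \beta(Q_1,\mathcal R)$ with $Q'_1 \geq Q'_2$, which is precisely what prunability requires. Thus the operator is prunable.

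\medskip

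\noindent\textbf{Restriction to most general piece-unifiers.}
Finally I would argue that discarding non-most-general piece-unifiers changes nothing. Soundness trivially survives restriction (we are only outputting fewer queries). For completeness and prunability, the key is Property \ref{prop:CompUnif}: whenever $\mu_1 \geq \mu_2$ on the same $(Q',H')$ we have $\beta(Q,R,\mu_1) \geq \beta(Q,R,\mu_2)$. Hence every rewriting produced by an arbitrary piece-unifier is covered by the rewriting produced by a most general one on the same subsets, so replacing ``all piece-unifiers'' by ``most general piece-unifiers'' yields a set that still covers $W_\infty(Q,\mathcal R)$ up to $\geq$. Completeness is preserved because the covering query maps to $F$ whenever the covered one does, and the proof of prunability in Lemma \ref{propCompRewifCompQ} can be instantiated with the most general piece-unifier $\mu_1$ on $(Q'_1,H'_1)$ in place of the one it constructs, since that most general unifier is at least as general and Property \ref{prop:CompUnif} then gives the required $\beta(Q_1,R,\mu_1) \geq \beta(Q_2,R,\mu_2)$.

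\medskip

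\noindent The main obstacle is the prunability step, but here it has already been fully discharged by Lemma \ref{propCompRewifCompQ}; the work of this theorem is essentially to assemble the three earlier results and to confirm that the most-general restriction is harmless via Property \ref{prop:CompUnif}.
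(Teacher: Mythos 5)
Your proposal is correct and follows exactly the paper's own proof, which derives soundness and completeness from Theorem \ref{theo:sc}, prunability from Lemma \ref{propCompRewifCompQ}, and the harmlessness of restricting to most general piece-unifiers from Property \ref{prop:CompUnif}. The only difference is that you spell out the details (matching the operator-level definitions to the cited results, and combining prunability with Property \ref{prop:CompUnif} for the restricted operator) that the paper leaves implicit in its three-line proof.
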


\begin{proof} Soundness and completeness follow from Theorem \ref{theo:sc}. Prunability follows from Lemma \ref{propCompRewifCompQ}. Thanks to Property \ref{prop:CompUnif}, the proof  remains true if most general piece-unifiers are considered.
\end{proof}

\section{Exploiting Single-Piece Unifiers}
\label{sec:algo}

We are now interested in the efficient computation of piece-based rewritings. We identify several sources of combinatorial explosion in the computation of the piece-unifiers between a query and a rule:
\begin{enumerate}
\item The problem of deciding whether there is a piece-unifier of a given query $Q$ with a given rule $R$ is NP-complete in the general case. NP-hardness is easily obtained by considering the case of a rule with an empty frontier: then there is a piece-unifier between $Q$ and $R$ if and only if there is a homomorphism from $Q$ to $H=\head{R}$, which is an NP-complete problem, $Q$ and $H$ being any sets of atoms. 
\item The number of most general piece-unifiers can be exponential in $|Q|$, even if the rule head $H$ is restricted to a single atom. For instance, assume that each atom of $Q$ unifies with $H$ and forms its own piece; then there may be $2 ^{|Q|}$ piece-unifiers obtained by considering all subsets of $Q$. 
\item The same atom in $Q$ may belong to distinct pieces according to distinct unifiers, as illustrated by the next example. 
\end{enumerate}

\begin{example}
 Let $Q=r(u,v) \wedge q(v)$ and $R = p(x) \rightarrow r(x,y) \wedge r(y,x) \wedge q(y)$. Atom
 $r(u,v)$ belongs to two single-piece unifiers: $(\{r(u,v),q(v)\}, \{r(x,y),q(y)\}, \{\{u,x\}, \{v,y\}\})$ and $(\{r(u,v)\},\{r(y,x)\},\{\{u,y\},\{v,x\}\})$.
For an additional example, see Example \ref{ex:p-u2}, where $p(u,v)$ and $p(v,w)$ both belong to $\mu_1$ and $\mu_2$. 
\end{example}

To cope with this complexity, one idea is to rely on \emph{single-piece} unifiers, i.e., piece-unifiers of the form $(Q',-,-)$ where $Q'$ is a single piece of $Q$.
This section is devoted to the properties of rewriting operators exploiting this notion.  Another idea is to focus on rules  with an \emph{atomic head}, which will be done in the next section. Atomic-head rules are often considered in the literature, specifically in logic programming or in deductive databases. Furthermore, any existential rule can be decomposed into an equivalent set of rules with atomic head by introducing a new predicate gathering the variables of the original head  (e.g. \cite{cali-gottlob-kifer:08,blms:09}). Hence, this restriction can be made without loss of expressivity. Considering atomic-head rules does not  simplify the definition of a piece-unifier  in itself, but its computation: there is now a unique way of associating any atom from $Q$ with the head of a rule.  Thus, deciding whether there is a piece-unifier of $Q$ with a rule can be done in linear time with respect to the size of $Q$ (which tames complexity source 1 in the above list) and each atom belongs to a single piece (
see complexity source 3), thus the set of all single-piece unifiers of $Q$ with a rule can be computed in polynomial time.

In this section, we show that the rewriting operator based on single-piece (most general) unifiers is sound and complete. However, perhaps surprisingly, it is not prunable, which prevents to use it in the generic algorithm. To recover prunability, we will define the aggregation of single-piece unifiers, which provides us with a new rewriting operator, which has all the desired properties and generates less rewritings than the standard piece-unifier.  Note however that this will not completely remove the second complexity source (i.e., the exponential number of unifiers to consider) since the number of agregations of single-piece unifiers can still be exponential in the size of $Q$, even with atomic-head rules.

\subsection{Single-Piece Based Operator}
\label{sec:single-prunability}
 
As expressed by the following theorem, (most general) single-piece unifiers provide a sound and complete operator.

\begin{theorem}\label{CompleteWithSPU}
\label{thm-sgp}
Given a BCQ $Q$ and a set of rules $\mathcal R$, the set of rewritings of $Q$ obtained by
considering exclusively most general single-piece unifiers is sound and complete. 
\end{theorem}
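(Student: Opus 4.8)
The plan is to prove soundness and completeness separately, reducing both to the already-established soundness and completeness of the full piece-based rewriting operator (Theorem \ref{theo:sc} and the remark that piece-based rewriting yields a sound and complete operator). Soundness is immediate: every single-piece unifier is in particular a piece-unifier, so each one-step rewriting produced by the single-piece operator is also a one-step piece-rewriting; since the piece-based operator is sound, so is its restriction to single-piece unifiers.

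The heart of the matter is completeness. First I would recall that, by the completeness of the full piece-based operator, for any fact $F$ with $F, \mathcal R \models Q$ there is a piece-rewriting $Q^*$ of $Q$ (obtained by a sequence of one-step piece-rewritings using arbitrary piece-unifiers) with $Q^* \geq F$. It therefore suffices to show that every one-step piece-rewriting obtained by an arbitrary piece-unifier $\mu = (Q',H',P_u)$ is covered by rewritings obtainable through a sequence of \emph{single-piece} rewriting steps. The key structural observation, already noted after Definition \ref{PU}, is that $Q'$ decomposes into pieces $P_1,\dots,P_m$, and one may form the corresponding single-piece unifiers $\mu_1,\dots,\mu_m$, each restricting $P_u$ (and $H'$) to the terms and head-atoms relevant to one piece. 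The plan is to show that applying $\mu_1,\dots,\mu_m$ in succession (each being a single-piece unifier of the current query with a \emph{fresh copy} of $R$) produces a query at least as general as $\beta(Q,R,\mu)$.

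The main obstacle — and the step to handle carefully — is that applying single-piece unifiers one at a time introduces a \emph{separate} copy of $\bod{R}$ for each piece, whereas the single multi-piece rewriting $\beta(Q,R,\mu)$ merges all pieces against one copy of $\head{R}$ and contributes only one copy of $\bod{R}$. Thus the sequential single-piece rewriting is in general strictly \emph{more specific} or incomparable at the atom level; what saves completeness is that we only need coverage up to homomorphism ($\geq$), and more importantly that completeness is stated via $W_\infty$, i.e.\ we are allowed an unbounded sequence of rewriting steps and need only produce \emph{some} query mapping to $F$, not a query equivalent to $\beta(Q,R,\mu)$. I would argue that since $\beta(Q,R,\mu) \geq F$, each piece of $Q'$ independently maps (through the composed substitution) into $F$ via the rule applications witnessing $F,\mathcal R \models Q$; unfolding this, the iterated single-piece rewriting that treats each piece with its own rule copy still admits a homomorphism to $F$, because distinct rule copies can be mapped to distinct (or identical) rule applications in the forward-chaining derivation that justifies $F,\mathcal R \models Q$.

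Concretely, I would phrase the completeness argument at the level of the forward-chaining / derivation semantics rather than trying to directly simulate $\mu$ by the $\mu_i$ on the syntactic query, since the syntactic simulation fails precisely because of the duplicated bodies. The cleanest route is: assume $F,\mathcal R \models Q$; consider a derivation (a sequence of rule applications producing a fact $F^*$ with $Q \geq F^*$ and $F \geq$ the relevant restriction); then read off, piece by piece, the single-piece unifiers corresponding to the last relevant rule application for each piece, and prove by induction on the length of the derivation that a query obtainable by single-piece steps maps to $F$. The induction hypothesis must be strong enough to track partial rewritings in which some pieces have been resolved and others not; the prunability failure mentioned in the theorem's surrounding text is a warning that the bookkeeping here is delicate, so I would make the induction invariant explicit before grinding through the case analysis.
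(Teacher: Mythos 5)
Your soundness argument and your opening reduction are on the right track, but the heart of your proposal rests on a directional error that derails it. You claim that applying the single-piece unifiers $\mu_1,\dots,\mu_m$ in succession, each against a fresh copy of $R$, yields a query that is ``in general strictly more specific or incomparable'' to $\beta(Q,R,\mu)$, and on that basis you abandon the syntactic simulation in favour of a forward-chaining argument. The opposite is true: because each step uses a freshly renamed copy of $R$, the duplicated bodies come with distinct variables, and the resulting query maps homomorphically onto $\beta(Q,R,\mu)$ by re-identifying the copies; hence it is \emph{more general} than $\beta(Q,R,\mu)$, which is exactly the coverage one needs. This is precisely the content of Property \ref{propMGRifMGSinglePieceRew} (see also Example \ref{ex-aggreg}, where the sequential rewriting $p(x,y)\wedge r(y,y')\wedge p(x',y')$ is strictly more general than $\beta(Q,R,\mu)=p(x,y)\wedge r(y,y)$), and it is the technical core of the paper's proof. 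So the ``obstacle'' you identify is not an obstacle at all, and the syntactic simulation you discard is the proof.

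Beyond this, your reduction is also incomplete as stated: it does not suffice to cover each \emph{one-step} piece-rewriting by single-piece rewritings, because in a $k$-step sequence the later piece-unifiers are applied to the intermediate query $Q^{r'}$, not to the more general query $Q^{s'}$ that you have built with single-piece steps. You need a transfer step --- the paper uses Lemma \ref{propCompRewifCompQ} (prunability of the piece-based operator): if $Q^{s'}\geq Q^{r'}$ and $\mu$ is a piece-unifier of $Q^{r'}$, then either $Q^{s'}\geq \beta(Q^{r'},R,\mu)$, or some piece-unifier $\mu'$ of $Q^{s'}$ satisfies $\beta(Q^{s'},R,\mu')\geq\beta(Q^{r'},R,\mu)$ --- and only then can Property \ref{propMGRifMGSinglePieceRew} be invoked at each step of the induction on $k$. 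Your fallback plan (an induction over forward-chaining derivations) could in principle be made to work, since it amounts to re-proving Theorem \ref{theo:sc} directly for single-piece unifiers, but you leave exactly the delicate part (the induction invariant tracking partially resolved pieces) unspecified, so as written the proposal has a genuine gap.
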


\begin{proof}
See Appendix.
\end{proof}

The proof of this theorem is given in Appendix since it is not reused hereafter. Indeed, the restriction to single-piece unifiers is not compatible with selecting most general rewritings  at each step, as done in Algorithm
\ref{algo-gen-fus}.  We present below some examples that illustrate this incompatibility. 

\begin{example} [Basic example] Let $Q = p(y,z) \wedge p(z,y)$ and $R =  r(x,x) \rightarrow
p(x,x)$. There are two single-piece unifiers of $Q$ with $R$, $\mu_1 = (\{p(y,z)\}, \{p(x,x)\}, \{ \{x,y,z\} \})$
and $\mu_2 = ( \{p(z,y)\} , \{p(x,x)\}, \{ \{x,y,z\} \})$, which yield 
the same rewriting, e.g. $Q_1 =  r(x,x)\wedge p(x,x)$. There is also a two-piece
unifier $\mu=(Q, \{p(x,x)\},\{ \{x,y,z\} \})$, which yields e.g. $Q' = r(x,x)$. A query equivalent to $Q'$ can be
obtained from $Q_1$ by a further
single-piece unification. Now, assume that we restrict unifiers to single-piece unifiers \emph{and} keep
 most general rewritings at each step. Since $Q \geq
Q_1$, $Q_1$ is not kept, so $Q'$ will never be generated,
whereas it is incomparable with $Q$.
\end{example}

Concerning the preceding example, one may argue that $u_1(Q)$ is redundant (and the same holds for  $u_2(Q)$), and that the problem would be solved by computing  $u_1(Q) \setminus u_1(Q')$ instead of  $u_1(Q \setminus Q')$ and making $u_1(Q)$ non-redundant (i.e., equal to $p(x,x)$) before computing  $u_1(Q) \setminus u_1(Q')$, which would then be empty. 
However, the problem goes deeper, as the next examples show it. 

\begin{example}[Ternary predicates] \sloppy
Let $Q = r(u,v,w) \wedge  r(w,t,u)$ and  $R = p(x,y) \rightarrow r(x,y,x)$. Again, there are two single-piece unifiers of $Q$ with $R$: $\mu_1 = ( \{r(u,v,w)\},$ $\{r(x,y,x)\}, \{\{u,w,x\},\{v,y\}\})$ and 
$\mu_2 = (\{r(w,t,u)\}, \{r(x,y,x)\}, \{\{u,w,x\},$ 
$\{t,y\}\})$. One obtains two rewritings more specific than $Q$, e.g.  $Q_1 = p(x,y) \wedge r(x,v,x), $ and $Q_2 = p(x,y) \wedge r(x,t,x)$, which are isomorphic. There is also a two-piece unifier $(Q, \{r(x,y,x)\}, \{\{u,w,x\},\{v,t,y\} \})$, which yields e.g. $p(x,y)$. If we remove $Q_1$ and $Q_2$, no query equivalent to $p(x,y)$ can be generated.
\end{example}

\begin{figure}[ht]
\centerline{\includegraphics[width=12cm]{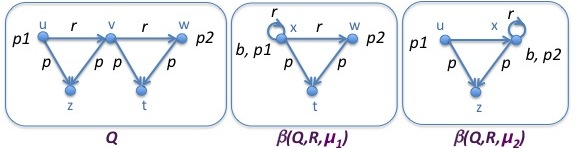}}
\caption{\label{fig-example-mono-pieces}  The queries in Example \ref{example-binary}}
\end{figure}

\begin{example}[Very simple rule]  \label{example-binary}This example has two interesting characteristics: (1) it uses unary/binary predicates only (2) it uses a very simple rule expressible with any lightweight description logic, i.e., a linear existential rule where no variable appears twice in the head or the body.
Let $Q = r(u,v) \wedge r(v,w) \wedge p(u,z) \wedge p(v,z) \wedge p(v,t) \wedge p(w,t) \wedge p_1(u) \wedge p_2(w)$ (see Figure \ref{fig-example-mono-pieces}) and $R = b(x) \rightarrow p(x,y)$. Note that $Q$ is not redundant. There are two single-piece unifiers of $Q$ with $R$, say $\mu_1$ and $\mu_2$, with pieces $Q'_1 = \{p(u,z),p(v,z)\}$ and 
$Q'_2 = \{p(v,t),p(w,t)\}$ respectively. The obtained queries are pictured in Figure \ref{fig-example-mono-pieces}. These queries are both more specific than $Q$.  The removal would prevent the generation of a query equivalent to $r(x,x)\wedge p_1(x)\wedge p_2(x)\wedge b(x)$, which could be generated from $Q$ with a two-piece unifier. 
\end{example}

\begin{property} The single-piece-based operator is not prunable.
\end{property}

\begin{proof}  Follows from the above examples.
\end{proof}

 By Theorem 5 and Property \ref{propMGRifMGSinglePieceRew}, one can show that the conclusion of Lemma \ref{lemma:main} is valid for single-piece unifiers, even though they are not prunable. This justifies that Lemma \ref{lemma:main} is not enough to prove the correctness of Algorithm \ref{algo-gen-fus}.
However, single-piece unifiers can still be used as an algorithmic brick to compute more complex piece-unifiers, as shown in the next subsection.

\subsection{Aggregated-Piece Based Operator}

We first explain the ideas that underline aggregated single-piece unifiers. Let us consider the set of single-piece unifiers naturally associated  with a piece-unifier $\mu$. If we apply successively each of these underlying
single-piece unifiers, we may obtain a CQ strictly more general than $\beta(Q,R, \mu)$, as illustrated in the next example.

\begin{example}\label{ex-aggreg} Let $R = p(x,y) \rightarrow q(x,y)$ and $Q = q(u,v)\wedge  r(v,w)\wedge q(t,w)$. 
Let $\mu = (Q', H', P_u)$ be a piece-unifier  of $Q$
with $R$ with 
$Q' = \{q(u,v),q(t,w)\}$, $H' = \{q(x,y)\}$ and $P_u = \{\{u,t,x\},\{v,w,y\}\}$. $\beta(Q,R,\mu) = p(x,y)\wedge  r(y,y)$.
$Q'$ has two pieces w.r.t. $\mu$:
$P_1=\{q(u,v)\}$ and 
$P_2=\{q(t,w)\}$. 
 If we successively computing the rewritings with the underlying single-piece unifiers $\mu_{P_1}$ and
$\mu_{P_2}$,  we
obtain $\beta(\beta(Q,R,\mu_{P_1}),R,\mu_{P_2}) = \beta(p(x,y)\wedge  r(y,w)\wedge
q(t,w),R,\mu_{P_2}) = p(x,y)\wedge  r(y,y')\wedge p(x',y')$, which is strictly more general than $\beta(Q, R, \mu)$.
\end{example}

Given a set $\mathcal U$ of ``compatible'' single-piece unifiers of a query $Q$ with a rule (the notion of ``compatible'' will be formally defined below), we can thus distinguish between the usual piece-unifier performed on the union of the pieces from the unifiers in
$\mathcal U$ and an ``aggregated unifier'' that would correspond to a sequence of applications of the single piece-unifiers in $\mathcal U$. This latter unifier is more interesting than the piece-unifier because, as illustrated by Example \ref{ex-aggreg}, it avoids generating some rewritings which are too specific. We will thus rely on the aggregation of single-piece unifiers to recover prunability. 

Note that, in this paper, we combine single-piece unifiers of the \emph{same} rule whereas in \cite{rr-13-klmt} we consider the possibility of combining unifiers of distinct rules (and thus compute rewritings from distinct rules in a single step).
We keep here the definitions introduced in \cite{rr-13-klmt}, while pointing out that, in the context of this paper, the rules $R_1 \ldots R_k$ are necessarily copies of the same rule $R$. 

\begin{definition}[Compatible Piece-Unifiers]
Let $\mathcal U = \{\mu_1=(Q'_1, H'_1, P_1)  \ldots \mu_k= (Q'_k, H'_k, P_k) \}$ be a set of piece-unifiers of $Q$ with rules $R_1 \ldots R_k$ respectively, where all $R_i$ have
disjoint sets of variables
(hence, for all $1 \leq i, j \leq k, i \neq j, \vars{H'_i} \cap  \vars{H'_j} = \emptyset$).
 $\mathcal U$ is said to be \emph{compatible} if (1) all $Q'_i$ and $Q'_j$ are pairwise disjoint; (2) the join of $P_1 \ldots P_k$ is admissible.
\end{definition}

\begin{definition}[Aggregated unifier]\sloppy
Let $\mathcal U = \{\mu_1=(Q'_1, H'_1, P_1),  \ldots, \mu_k= (Q'_k, H'_k, P_k) \}$ be a compatible set of piece-unifiers of $Q$ with rules $R_1 \ldots R_k$.
An \emph{aggregated unifier} of $Q$ with $R_1 \ldots R_k$ w.r.t. $\mathcal U $ is $\mu = (Q',H',P)$ where: (1) $Q' = Q'_1 \cup \ldots \cup Q'_k$; (2) $H' = H'_1 \cup \ldots \cup H'_k$; (3) $P$ is the join of $P_1 \ldots P_k$. It is said to be \emph{single-piece} if all the piece-unifiers of $\mathcal U$ are single-piece. It is said to be \emph{most general} if all the piece-unifiers of $\mathcal U$ are most general.
\end{definition}

  \begin{definition}[Aggregation of a set of rules]
  The aggregation of a set of rules $\mathcal{R} = \{ R_1 \ldots R_k \}$, denoted by $R_1 \diamond \ldots \diamond R_k$, is the rule $\bod{R_1} \wedge \ldots \wedge \bod{R_k} \rightarrow \head{R_1} \wedge \ldots \wedge \head{R_k}$, where it is assumed that all rules have disjoint sets of variables. 
 \end{definition}

 \begin{property} \label{prop_aggr=pu}
 Let $Q$ be a BCQ
 and $\mathcal U = \{\mu_1=(Q'_1, H'_1, P_1)  \ldots \mu_k= (Q'_k, H'_k, P_k) \}$ be a compatible set of piece-unifiers of $Q$ with $R_1 \ldots R_k$.
Then the aggregated unifier of $\mathcal U$ is a piece-unifier of $Q$ with the aggregation of $\{ R_1 \ldots R_k \}$.
 \end{property}

 \begin{proof}
 We show that the aggregated unifier $\mu = (Q', H', P_u)$ of $\mathcal U$ satisfies the conditions of the definition of a piece-unifier. Condition 1 is fulfilled since by definition of compatibility, the join of $P_1 \ldots P_k$ is admissible. Condition 2 is satisfied too, because since $P_1 \ldots P_k$ satisfy it, so does their join. Indeed, if a class contains an existential variable, it cannot be merged with another by aggregation because its other terms are non-separating variables, hence do not appear in other classes. Concerning the last condition, for all $1 \leq i \leq k$ we have $u_i(H'_i) = u_i(Q'_i)$ where $u_i$ is a substitution associated with $P_i$. Since $Q' = \bigcup_{i=1}^k Q'_i$ and $H' = \bigcup_{i=1}^k H'_i$ we are sure that for any  substitution $u$ associated with $P_u$ we have $u(H') = u(Q')$.
 \end{proof}
 
 The rewriting associated with an aggregated unifier $\mu$ can thus be defined as $\beta(Q, R_1 \diamond \ldots \diamond R_k, \mu)$. 
  It is equivalent to the rewriting obtained by applying the single-piece unifiers one after the other.

\begin{example} 
 Consider again Example \ref{ex-aggreg}. Let $R' = p(x',y') \rightarrow q(x',y')$ be a copy of $R$. Then the aggregation $R \diamond R'$ is the rule $ p(x,y) \wedge p(x',y') \rightarrow q(x,y) \wedge q(x',y')$.  Let $\mathcal U = \{\mu_{P_1}, \mu_{P_2}\}$ where $\mu_{P_1} =(\{q(u,v)\}, \{q(x,y)\}, \{\{u,x\}, \{v,y\}\})$ and  $\mu_{P_2} =(\{q(t,w)\}, \{q(x',y')\}, \{\{t,x'\}, \{w,y'\}\})$ . The aggregated unifier of $Q$ with $R, R'$ w.r.t.  $\mathcal U $ is $(\{q(u,v),q(t,w)\}, \{q(x,y),q(x',y')\}, 
 \{\{u,x\}, \{v,y\}, \{t,x'\}, \{w,y'\}\})$. The associated rewriting of $Q$ is $p(x,y)\wedge  r(y,y')\wedge p(x',y')$.
  \end{example}

Note that, if we assumed, in the definition of an aggregated unifier, that $R_1 = \ldots R_k = R$ (and in particular have the same variables), then the aggregated unifier would be the usual piece-unifier, and the aggregation of $R_1 \ldots R_k$ would be exactly $R$ after removal of duplicate atoms.  In other words, to build a standard piece-unifier of $Q$ with $R$ we consider partitions of $\terms{Q} \cup \terms{\head{R}}$, while in the aggregation operation we consider  $\terms{Q} \cup \bigcup_{i=1}^k \terms{\head{R_i}}$, where $k$ is the number of single-piece unifiers of $Q$ with $R$ and each $R_i$ is safely renamed from $R$. 

The next property shows that from any piece-unifier $\mu$, one can build a most general single-piece aggregated unifier, which produces a rewriting more general than the one produced by $\mu$. 

\begin{property} \label{prop:aggreg}
	For any piece-unifier $\mu$ of $Q$ with $R$, there is a most general single-piece aggregated unifier $\mu_\diamond$ of $Q$ with $R_1 \ldots R_k$ copies of $R$ such that $\beta(Q,R_1 \diamond \ldots \diamond R_k, \mu_\diamond) \geq  \beta(Q,R, \mu)$. 
\end{property}

\begin{proof}
Let $Q'_1,\ldots,Q'_k$ be the pieces of $Q'$ according to $\mu=(Q',H',P_u)$ and let $u$ be a substitution associated to $P_u$. 
Let $R_1 \ldots R_k$ be safely renamed copies of $R$. Let $h_i$ denote the variable renaming used to produce $R_i$ from $R$.  
Let  $\mathcal U = \{\mu_1=(Q'_1,H'_1,P_u^1), \dots, \mu_k=(Q'_k,H'_k,P_u^k)\}$ be a set of piece-unifiers of $Q$ with $R_1, \dots, R_k$ built as follows for all $i$:  
\begin{itemize}
\item $H'_i$ is the image by $h_i$ of the subset of $H'$ unified by $u$ with $Q'_i$
\item let $h_i(P_u)$ be the partition built from $P_u$ by replacing each $x \in \vars{H'}$ by $h_i(x)$; then $P_u^i$ is obtained from $h_i(P_u)$  by (1) restricting it to the terms of $Q'_i$ and $H'_i$  (2) refining it as much as possible while keeping the property that $u_i(H'_i) = u_i(Q'_i)$, where $u_i$ is a substitution associated with the partition. 
\end{itemize}
For any $\mu_i=(Q'_i,H'_i,P_u^i)$ we immediately check that:
\begin{enumerate}
	\item $\mu_i$ is a most general piece-unifier.
	\item $\mu_i$ is a single-piece unifier.
	\item $\forall \mu_j \in \mathcal U$, $\mu_i \neq \mu_j$, $\mu_j$ and $\mu_i$ are compatible.
\end{enumerate}

Let $\mu_\diamond = (Q'_\diamond,H'_\diamond, P_u^\diamond)$ be the aggregated unifier of $Q$ with $R_1, \dots, R_k$ w.r.t. $\mathcal U$.
Note that $Q'_\diamond=Q'$.  The above properties fulfilled by any $\mu_i$ from $\mathcal U$ ensure that $\mu_\diamond$ is a most general single-piece aggregated unifier.

We note $ R_{\diamond} = R_1 \diamond \ldots \diamond R_k$. 
It remains to prove that $\beta(Q,R_{\diamond}, \mu_{\diamond}) \geq  \beta(Q,R, \mu)$.
Let $u_\diamond$ be a substitution associated with $P_u^\diamond$. For each class $P$ of $P_u$ (resp. $P_u^\diamond$), we call selected element the unique element $t$ of $P$ such that $u(t)=t$ (resp. $u_\diamond(t)=t$).

We build a substitution $s$ from the selected elements in $P_u^\diamond$ which are variables to the selected elements in $P_u$ as follows:
for any class $P$ of $P_u^\diamond$, let $t$ be the selected element of $P$: if $t$ is a variable of $Q'$ then $s(t) = u(t)$; else $t$ is a variable of a $H'_i$: then $s(t) = u(h_i^{-1}(t))$. Note that for any term $t$ in $P_u^\diamond$, there is a variable renaming $h_i$ such that $s(u_\diamond(t))=u(h_i^{-1}(t))$ (if $t$ is a constant or a variable from $\vars{Q}$ then any $h_i$ can be chosen).  

We build now a substitution $h$ from $\vars{\beta(Q,R_\diamond,\mu_\diamond)}$ to $\terms{\beta(Q,R,\mu)}$, by considering three cases according to which part of $\beta(Q,R_\diamond,\mu_\diamond)$ the variable occurs (in $Q$ but not in $Q'$, in $\bod{R_i}$ but not in $H'_i$, or in the remaining part corresponding to the images of $sep(Q')$ by $u_\diamond$):
	\begin{enumerate}
		\item if $x \in \vars{Q} \setminus \vars{Q'}$, $h(x) = x$;
		\item if $x \in \vars{\bod{R_i}} \setminus \vars{H'_i}$, $h(x) = h_i^{-1}(x)$; 
		\item if $x \in u_\diamond(\sep{Q'})$(or alternatively $x \in u_\diamond(\fr{R_\diamond}\cap \vars{H'_\diamond})$), $h(x) = s(x)$ ;
	\end{enumerate}

 We conclude by showing that $h$ is a homomorphism from $\beta(Q,R_\diamond,\mu_\diamond) = u_\diamond(\bod{R_1} \cup \dots \cup \bod{R_k}) \cup u_\diamond(Q \setminus Q')$ to $\beta(Q,R, \mu) = u(\bod{R}) \cup u(Q\setminus Q')$ with two points:
 \begin{enumerate}
 	\item for all $i$, $h(u_\diamond(\bod{R_i})) = u(\bod{R})$. Indeed, for any variable $x  \in \vars{\bod{R_i}}$:
\begin{itemize}
	\item either $x \in \vars{\bod{R_i}} \setminus \vars{H'_i}$, so $h(u_\diamond(x)) = h(x) = h_i^{-1}(x)= u(h_i^{-1}(x))$ ($u$  does not substitute the variables in $\vars{\bod{R}} \setminus \vars{H'}$), 
	\item or $x \in \fr{R_i} \cap \vars{H'_i}$, so $h(u_\diamond(x)) = s(u_\diamond(x))=u(h_i^{-1}(x))$;
\end{itemize}

 	\item $h(u_\diamond(Q \setminus Q')) = u(Q \setminus Q')$. Indeed, for any variable $x \in \vars{Q \setminus Q'}$:
\begin{itemize}
	\item either $x \in \vars{Q'}$, then $h(u_\diamond(x)) = s(u_\diamond(x)) = u(h_i^{-1}(x)) = u(x)$ ($h_i^{-1}$ does not substitute the variables in $Q$),
	\item or $x \in \vars{Q} \setminus \vars{Q'}$, then $h(u_\diamond(x)) = h(x) = x = u(x)$ ($u_\diamond$ and $u$ do not substitute the variables in $\vars{Q} \setminus \vars{Q'}$).
\end{itemize} 
 \end{enumerate}
\end{proof}

We call \emph{single-piece aggregator} the rewriting operator that computes the set of one-step rewritings of a query $Q$ by considering all the most general single-piece aggregated unifiers of $Q$.

 \begin{theorem} 
The single-piece aggregator is sound, complete and prunable.
 \end{theorem}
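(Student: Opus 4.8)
The plan is to establish the three properties---soundness, completeness, and prunability---by reducing each one to results already proved for the standard piece-based rewriting operator. The crucial link is Property~\ref{prop_aggr=pu}, which tells us that every single-piece aggregated unifier of $Q$ is in fact a genuine piece-unifier of $Q$ with the aggregated rule $R_1 \diamond \ldots \diamond R_k$, together with the observation that aggregating safely renamed copies of a rule $R$ yields a rule logically equivalent to $R$ (after removal of duplicate atoms). This means each one-step rewriting produced by the single-piece aggregator coincides with a one-step piece-rewriting with respect to an equivalent rule set, so the logical soundness and completeness guaranteed by Theorem~\ref{theo:sc} transfer directly.

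First I would treat soundness. Given $Q' \in \beta(Q, R_1 \diamond \ldots \diamond R_k, \mu_\diamond)$ for a single-piece aggregated unifier $\mu_\diamond$, Property~\ref{prop_aggr=pu} shows $\mu_\diamond$ is a piece-unifier of $Q$ with the aggregated rule, and the aggregated rule is equivalent to $R$; hence $Q'$ is a genuine piece-rewriting of $Q$ with $\mathcal R$, and soundness follows from the soundness direction of Theorem~\ref{theo:sc}. For completeness, I would argue that the single-piece aggregator generates, up to the preorder $\geq$, at least as many rewritings as the full piece-based operator. Concretely, Property~\ref{prop:aggreg} states that for any piece-unifier $\mu$ of $Q$ with $R$ there is a most general single-piece aggregated unifier $\mu_\diamond$ with $\beta(Q, R_1 \diamond \ldots \diamond R_k, \mu_\diamond) \geq \beta(Q, R, \mu)$. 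Thus every one-step piece-rewriting is covered by a one-step single-piece-aggregated rewriting; by an easy induction on the number of rewriting steps this lifts to $W_\infty$, so the aggregator covers $W_\infty$ for the full piece-based operator. Since the full piece-based operator is complete (by the previous theorem), completeness of the aggregator follows.

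For prunability I would mimic the structure of Lemma~\ref{propCompRewifCompQ}, but the cleanest route is again to route through the full piece-based operator. Suppose $Q_1 \geq Q_2$, let $\mu_\diamond^2$ be a single-piece aggregated unifier of $Q_2$, and set $Q'_2 = \beta(Q_2, \ldots, \mu_\diamond^2)$. Since $\mu_\diamond^2$ is (via Property~\ref{prop_aggr=pu}) an ordinary piece-unifier of $Q_2$, Lemma~\ref{propCompRewifCompQ} yields either $Q_1 \geq Q'_2$---in which case we are done---or a piece-unifier $\mu_1$ of $Q_1$ with $\beta(Q_1, R, \mu_1) \geq Q'_2$. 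In the latter case I would apply Property~\ref{prop:aggreg} to $\mu_1$ to obtain a most general single-piece aggregated unifier $\mu_\diamond^1$ of $Q_1$ with $\beta(Q_1, \ldots, \mu_\diamond^1) \geq \beta(Q_1, R, \mu_1) \geq Q'_2$, which is exactly the condition required for prunability of the aggregator.

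I expect the main obstacle to be the prunability argument, specifically making precise the interface between Lemma~\ref{propCompRewifCompQ} and Property~\ref{prop:aggreg}. The lemma is phrased for piece-unifiers of $Q_1$ with a single rule $R$, whereas the aggregator works with aggregated rules built from several renamed copies, so one must verify that the rewriting sets coincide up to equivalence under the identification of $R$ with its aggregation of copies---and that ``most general'' single-piece aggregation does not lose the generality needed to dominate $\beta(Q_1,R,\mu_1)$. Once this bookkeeping about renamings and the equivalence of aggregated rules with $R$ is checked, the three properties fall out by combining Theorem~\ref{theo:sc}, Property~\ref{prop_aggr=pu}, Property~\ref{prop:aggreg}, and Lemma~\ref{propCompRewifCompQ}, with completeness and prunability each requiring a routine induction on the length of rewriting sequences.
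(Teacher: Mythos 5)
Your proposal is correct and takes essentially the same route as the paper's own (much terser) proof: soundness via Property~\ref{prop_aggr=pu} together with the fact that a rule entails the aggregation of its renamed copies, and completeness and prunability by combining the corresponding properties of the piece-based operator (Theorem~\ref{theo:sc} and Lemma~\ref{propCompRewifCompQ}) with Property~\ref{prop:aggreg}. The ``copies-of-copies'' bookkeeping you flag in the prunability step (reducing single-piece aggregated unifiers built over copies of the aggregated rule back to ones built over copies of $R$) is indeed the only delicate point, and the paper's proof passes over it silently as well.
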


 \begin{proof}
 Soundness comes from Property \ref{prop_aggr=pu} and from the fact that for any set of rules $\mathcal{R}$, let the rule $R$ be its aggregation, one has $\mathcal{R}\models R$.
 Completeness and prunability rely on the fact that the piece-based rewriting operator fulfills these properties and the fact that for any queries $Q$ and $Q'$ and any rule $R$, if $Q' = \beta(Q,R,\mu)$, where $\mu$ is a piece-unifier, then the query $Q''$ obtained with the single-piece aggregator corresponding to $\mu$  is more general than $Q'$, as expressed by Property \ref{prop:aggreg}.  
 \end{proof}

\section{Implementation and Experiments}

As explained in Section \ref{sec:algo}, we now restrict our focus to rules  with an \emph{atomic head}. We first detail algorithms for computing all the most general single-piece unifiers of a query $Q$ with a rule $R$ and explain how we use them to compute all single-piece aggregators. Then we report first experiments.

\subsection{Computing single-piece unifiers and their aggregation}

When a rule $R$ has an atomic head, it holds that every atom in $Q$ participates in \emph{at most one }most general single-piece unifier of $Q$ with $R$ (up to bijective variable renaming). This is is a corollary of the next property.

\begin{property}\label{corollary}
Let $R$ be an atomic-head rule and $Q$ be a BCQ.  For all atom $a\in Q$, there is at most one $Q'\subseteq Q$ such that $a\in Q'$ and $Q'$ is a piece for a piece-unifier of $Q$ with $R$.
\end{property}

\begin{proof} 
We prove by contradiction that two single-piece unifiers cannot share an atom of $Q$. Assume there are $Q_1'\subseteq Q$ and $Q_2' \subseteq Q$ such that $Q_1'\neq Q_2'$ and $Q_1'\cap Q_2'\neq\emptyset$, and $\mu_1 = (Q_1',H,P_{u}^1)$ and $\mu_2 = (Q_2',H, P_{u}^2)$  two single-piece-unifiers of $Q$ with $R$, with $H = \head{R}$.
Since $Q_1'\neq Q_2'$, one has $Q_1'\setminus Q_2'\neq\emptyset$ or  $Q_2'\setminus Q_1'\neq\emptyset$. Assume $Q_1'\setminus Q_2'\neq\emptyset$. 
Let $A=Q_1'\cap Q_2'$ and $B=Q_1'\setminus A$.  
There is at least one variable $x \in \vars{A}\cap \vars{B}$ such that there is an existential variable $e$ of $\head{R}$ in the class of $P_u^1$ containing $x$ (otherwise $\mu_1$ has more than one piece).
Since $H$ is atomic, there is a unique way of associating any atom with $H$, thus the class of $P_u^2$ containing $x$ contains also $e$. It follows that $Q'_2$ is not a piece since one atom of $A$ and one atom of B share $x$ unified with an existential variable in $\mu_2$ while $A$ is included in $Q'_2$ and $B$ is not.
\end{proof}

To compute most general single-piece unifiers, we first introduce the notion of the unification of a set of atoms with the head of a rule. This notion is an adaptation of the classical logical unification that takes existential variables into account. To define a piece-unifier, the set of atoms has to satisfy an additional constraint on its separating variables. 

\begin{definition}[Partition by Position]
Let $A$ be a set of atoms with the same predicate $p$. The \emph{partition by position} associated with $A$, denoted by $P_p(A)$, is the partition on $\terms{A}$ such that two terms of $A$ appearing in the same position $i$ ($1 \leq i \leq arity(p)$) are in the same class of $P_p(A)$.
\end{definition}

\begin{definition}[Unifiability]
Let $R$ be an atomic head rule and let $A$ be a set of atoms with same predicate $p$ as $\head{R}$. $A$ is \emph{unifiable} with $R$ if no class of $P_p(A \cup \head{R})$ 
contains two constants, or contains two existential variables of $R$, or contains a constant and an existential variable of $R$, or contains an existential variable of $R$ and a frontier variable of $R$.
\end{definition}

\begin{definition}[Sticky Variables]
Let $Q$ be a BCQ, $R$ be an atomic head rule and $Q'$ be a subset of atoms in $Q$ with the same predicate $p$ as $\head{R}$. The \emph{sticky variables} of $Q'$ with respect to $Q$ and $R$, denoted by $sticky(Q')$, are the separating variables of $Q'$ that occur in a class of $P_p(Q' \cup \head{R})$ 
 containing an existential variable of $R$.
\end{definition}

The following property follows from the definitions:

\begin{property}

Let $Q$ be a BCQ, $R$ be an atomic head rule, and $Q'$ a subset of atoms in $Q$ with the same predicate $p$ as $\head{R}$. Then $\mu=(Q', \head{R}, P_p(Q' \cup \head{R}))$ is a piece-unifier of $Q$ with $R$ iff $Q'$ is unifiable with $\head{R}$ and $sticky(Q') = \emptyset$.

\end{property}

The fact that an atom from $Q$ participates in at most one most general single-piece unifier suggests an incremental method to compute these unifiers. 
Assume that the head of $R$ has predicate $p$. We start from each atom $a \in Q$ with predicate $p$ and compute the subset of atoms from $Q$ that would necessarily belong to the same piece as $a$; more precisely, at each step, we build $Q'$ such that $Q'$ and $\head{R}$ can be unified, then check if $sticky(Q')=\emptyset$.  If there is a piece-unifier of $Q'$ built in this way with $\head{R}$, all atoms in $Q'$ can be removed from $Q$ for the search of other single-piece unifiers; otherwise, $a$ is removed from $Q$ for the search of other single-piece unifiers but the other atoms in $Q'$ still have to be taken into account. Note that in both cases, the notion of separating variables is still relative to the original $Q$.

\begin{example} Let $R = q(x) \rightarrow p(x,y)$ and $Q = p(u,v)\wedge p(v,t)$. Let us start from $p(u,v)$: this atom is unifiable with $\head{R}$ and $p(v,t)$ necessarily belongs to the same piece-unifier (if any) because $v\in sticky(\{p(u,v)\})$ ($v$ is in the same class that the existential variable $y$); however, $\{p(u,v), p(v,t)\}$ is not unifiable with $\head{R}$ because, since $v$ occurs at the first and at the second position of a $p$ atom, $x$ and $y$ should be unified, which is not possible since $y$ is an existential variable; thus $p(u,v)$ does not belong to any piece-unifier with $R$. However, $p(v,t)$ still needs to be considered. Let us start from it: $p(v,t)$ is unifiable with $\head{R}$ and forms its own piece because sticky(\{p(v,t)\}) is empty ($t$ is in the same class that the existential variable $y$ but is not shared with another atom). There is thus one (most general) piece-unifier of $Q$ with $R$, namely $(\{p(v,t)\}, \{p(x,y)\}, \{\{v,x\},\{t,y\}\})$.
\end{example} 

More precisely, Algorithm \ref{algoMGSPU} first builds the subset $A$ of atoms in $Q$ with the same predicate as $\head{R}$. While $A$ has not been emptied, it initializes a set $Q'$ by picking an atom $a$ in $A$,  then repeats the following steps: 
\begin{enumerate}
\item check if $Q'$ is unifiable with $\head{R}$; else, the attempt with $a$ fails; 
\item check if $sticky(Q')=\emptyset$; if so, it is a single-piece unifier and all the atoms in $Q'$ are removed from $A$;  
\item otherwise, the algorithm tries to extend $Q'$ with all the atoms in $Q$ containing a variable from $sticky(Q')$; if these atoms are in $A$, $Q'$ can grow, otherwise the attempt with $a$ fails. 
\end{enumerate}

\begin{algorithm}[ht]
\KwData{a CQ $Q$ and an atomic-head rule $R$}
\KwResult{the set of most general single-piece unifiers of $Q$ with $R$}
\Begin{
  $U\leftarrow \emptyset$; \tcp{ resulting set}
  $A \leftarrow \{a\in Q \mid predicate(a)=predicate(\head{R})\}$\; 
  \While{$A\neq\emptyset$}{
  	$a\leftarrow$ choose an atom in $A$ \;
  	$Q'\leftarrow\{a\}$ \;
 \While{$Q' \subseteq A$ and $unifiable(Q',\head{R})$ and $sticky(Q') \neq \emptyset$}{
  	    $Q'\leftarrow Q' \cup \{a'\in Q \mid a' ~\textrm{contains a variable in ~}sticky(Q')\}$ \;
}
  	\If{$Q' \subseteq A$ and $unifiable(Q',\head{R})$}{
  		$U\leftarrow U\cup \{(Q',\head{R},P_p(Q'\cup\head{R}))\}$ \;
		$A\leftarrow A\setminus Q'$
		}
	\Else{
		$A\leftarrow A\setminus \{a\}$
		}
	}
  \Return{$U$}
 }
\caption{Computation of all most general single-piece unifiers\label{algoMGSPU}}
\end{algorithm}

Now, to compute the set of single-piece aggregators of $Q$ with $R$, we proceed as follows:

\begin{enumerate}
\item Compute all (most general) single-piece unifiers of $Q$ with $R$: \\
$U_1 = \{\mu_1, \ldots, \mu_k\}$;
\item For $i$ from $2$ to the greatest possible rank (as long as $U_i$ is not empty): let $U_i$ be the set of all $i$-unifiers obtained by aggregating an $i-1$-unifier from $U_{i-1}$ and a single-piece unifier from $U_1$. 
\item Return the union of all the $U_i$ obtained. 
\end{enumerate}

\subsection{Experiments and Perspectives}

The generic breadth-first algorithm, instantiated with the  rewriting operator described in the preceding section, has been implemented in Java. 
 First experiments were led on sets of existential rules obtained by translation from ontologies 
expressed in the description logic DL-Lite$_{\mathcal R}$ and  developed in several research projects, namely ADOLENA (A), STOCKEXCHANGE (S), UNIVERSITY (U) and VICODI (V). See \cite{icde-11-gop} for more details. The obtained rules have atomic head and body, which corresponds to the linear Datalog+/- fragment. The associated queries were  generated by the tool Sygenia \cite{isg:12}. Sygenia  provided us with 114, 185, 81 and 102 queries for ontologies A, S, U and V respectively. In \cite{klmt:12} we compared with other systems concerning the size of the output and pointed out that none of the existing systems output a complete set of rewritings. However, beside the fact that these systems have evolved since then, one can argue that
 the size of the rewriting set should not be a decisive criterion (indeed, assuming that the systems are sound and complete, a minimal rewriting set can be obtained by selecting most general elements, see Theorem \ref{prop-mgr}). Therefore, other criteria have to be taken into account, such as the running time or the total number of BCQs built during the rewriting process. 
 
Table \ref{tab:expe} presents for each ontology the total number of generated rewritings, i.e., the sum of the number of generated BCQs for all the queries associated with a given ontology  (\# \emph{generated} column). This number can be compared with the total number of output rewritings, i.e.,  the sum of the cardinalities of the final output sets for all the queries associated with a given ontology (\# \emph{output} column).  The generated rewritings are all the rewritings built during the rewriting process (excluding the initial query and possibly including some multi-occurrences of the same rewritings). Since we remove the subsumed rewritings at each step of the breadth-first algorithm, only some
of the generated rewritings at a given step are explored at the next step. We can see that the number of generated queries can be huge with respect to the size of the output, specially for Ontology A. 

Concerning the running time, our implementation is yet far from being optimized. Moreover, our system is able to process any kind of existential rules, which involves complex mechanisms. Much time could be saved by processing specific kinds of rules in a specific way.  In particular, a large part of available ontologies is actually composed of concept and role hierarchies. For instance, 64\%, 31\%, 47\% and 90\%  of the rules in ontologies A, S, U and V respectively, express atomic concept or atomic role inclusions. 
By simply processing these sets of rules as preorders, we can dramatically decrease the running time and the number of generated queries. First experiments with ontology A show that the running time is decreased by a factor of 74 approximatively, and the number of generated queries is divided by 37.

Further work includes processing specific kinds of rules in a specific way while keeping a system able to process any set of existential rules. Other optimizations could be implemented such as exploiting dependencies between rules to select the rules to be considered at each step. Moreover, the form of the considered output itself, i.e., a union of conjunctive queries, leads to combinatorial explosion.  Considering semi-conjunctive queries instead of conjunctive queries as in \cite{ijcai-13-t} can save much with respect to both the running time and the size of the output, without compromising the efficiency of query evaluation; to generate semi-conjunctive queries, the piece-based rewriting operator is combined with query factorization techniques.  Finally, further experiments should be performed on more complex ontologies. However, even if slightly more complex ontologies could be obtained by translation from decription logics, real-world ontologies that would take advantage of the expressiveness of existential rules, as 
well as associated queries, are currently lacking.

\begin{table}
\begin{center}
\begin{tabular}{| c | c | c | c | c |}
\hline
 \  rule base \  & \  \# output \  & \  \# generated   \\ \hline
A & 3209 & 146 523  \\ \hline
S & 557 & 6515  \\ \hline
U & 486 & 2122  \\ \hline
V & 2694 & 5318  \\ \hline
\end{tabular}
\vspace{0.2cm}
\caption{ Generated Queries with the Single-Piece Aggregator \label{tab:expe}}
\end{center}
\end{table}

 \label{sec:expe}

\paragraph{Acknowledgments.} We thank Giorgio Orsi for providing us with rule versions of the ontologies. This work was partially funded by the ANR project PAGODA (ANR-12-JS02-007-01).

\bibliographystyle{alpha}
\bibliography{biblio}

\newpage
\section*{Appendix: Proof of Theorem \ref{CompleteWithSPU}}

To prove the completeness of the single-piece based operator, we first prove the following property:  

\begin{property}\label{propMGRifMGSinglePieceRew}
For any piece-unifier $\mu$ of $Q$ with $R$, there is a sequence of rewritings of $Q$ with
$R$  using exclusively most general single-piece unifiers and leading to a BCQ $Q^s$ such that
$Q^s \geq  \beta(Q,R, \mu)$.
\end{property}

\begin{proof}
We first introduce some notations. Given a partition $P$ and $x$ a term occurring in $P$, $P(x)$ is the class of $P$ that contains $x$. Let $P$ and $P'$ be two partitions such that the terms of $P'$ are included in the terms of $P$ and any class of $P'$ is included in a class of $P$: 
then we say that $P'$ is a subpart of $P$ (note that if $P'$ and $P$ are defined on the same set, it means that $P'$ is finer than $P$) 

Let $Pc_1,\ldots,Pc_n$ be the pieces of $Q'$ according to $\mu=(Q',H',P_u)$ and let $u$ be a substitution associated to $P_u$.  
Let $Q_0=Q, Q_1,\ldots Q_n=Q^s$ be a sequence of
rewritings of $Q$ built as follows: for $1\leq i\leq n$, $Q_i=\beta(Q_{i-1}, R_i,\mu_i)$
where $\mu_i=(Q'_i,H'_i,P_u^i)$ and $u_i$ is a substitution associated with $P_u^i$ with:
\begin{itemize}
\item $R_i$ is a safely renamed copy of $R$ by a variable renaming  $h_i$.
\item $H'_i$ is the image by $h_i$ of the subset of $H'$ unified by $u$ with $Pc_i$
\item $P_u^i$ is obtained from partition $h_i(P_u)$ (built from $P_u$ by applying $h_i$) by (1) restricting it to the terms of $Q'_i$ and $H'_i$  (2) refining it as much as possible while keeping the property that it is associated with a unifier of $H'_i$ and $Q'_i$. Note that $P_u^i$ is a subpart of $h_i(P_u)$.
\item Let $u^{\circ}_i = u_{i}  \circ u_{i-1} \circ \dots \circ u_1$. 
Let $P_u^{i \circ}$ be the partition assigned to $u^{\circ}_i$. We know that
$P_u^{i \circ}$ is the join of  $P_u^1, \dots P_u^{i}$, thus $P_u^{i \circ}$ is a subpart of $P_u^h$, the join of the $h_i(P_u)$ for $1\leq i\leq n$. Indeed, for each $i$, $P_u^{i}$ is a subpart of $h_i(P_u)$ and
the following property is easily checked: let $s_1$ and $s_2$ be substitutions with \emph{disjoint} domains, and $P_s^1$, $P_s^2$ be their associated partitions; then, the partition assigned to $s_1 \circ s_2$ (and to $s_2 \circ s_1$) is exactly the join of $P_s^1$ and $P_s^2$. 
\item $Q'_1 = Pc_1$ and for $i>1$, $Q'_i = u_{i-1}^{\circ}(Pc_i)$. We ensure  the property than $\forall i$, $u_{i-1}^{\circ}(Pc_i) \cap u_{i-1}^{\circ}(Q \setminus Q') = \emptyset$. If $u_{i-1}^{\circ}(Pc_i) \cap u_{i-1}^{\circ}(Q \setminus Q') \neq \emptyset$, we remove $\mu_i$ from the sequence because it is useless since $u_{i-1}^{\circ}(Pc_i) \subseteq u_{i-1}^{\circ}(Q \setminus Q')$. Indeed, let $a \in u_{i-1}^{\circ}(Pc_i) \cap u_{i-1}^{\circ}(Q \setminus Q')$, there are $b \in Pc_i$ and $b' \in Q \setminus Q'$, $b \neq b'$ such that $u_{i-1}^{\circ}(b) = u_{i-1}^{\circ}(b') = a$, so $\terms{b} \subseteq sep(Pc_i)$, so $\{b\}$ is a piece, so $Pc_i = \{b\}$ and then $u_{i-1}^{\circ}(Pc_i) = \{a\} \subseteq u_{i-1}^{\circ}(Q \setminus Q')$. For similar reasons, we ensure the property that $\forall i$, $\forall j > i$, $u_{i-1}^{\circ}(Pc_i) \cap u_{i-1}^{\circ}(Pc_j) = \emptyset$. 

\end{itemize}

We now show that:
\begin{enumerate}
	\item $\mu_i$ is a piece-unifier
	\item $\mu_i$ is a most general piece-unifier
	\item $\mu_i$ is a single-piece unifier
\end{enumerate}

For the first point:
\begin{itemize}
\item $Q'_i\subseteq Q_{i-1}$ since $\forall i$, $u_{i-1}^{\circ}(Pc_i) \cap u_{i-1}^{\circ}(Q \setminus Q') = \emptyset$ and $\forall i$, $\forall j > i$ $u_{i-1}^{\circ}(Pc_i) \cap u_{i-1}^{\circ}(Pc_j) = \emptyset$
\item $H'_i\subseteq \head{R_i}$ by construction.
\item $P_u^i$ satisfies the conditions of a piece-unifier because $P_u$ satisfies them and $P_u^i$ is a subpart of $h_i(P_u)$.
\end{itemize}

For the second point, since $P_u^i$ is the finest partition associated with a piece-unifier of $H'_i$ and $Q'_i$, we are sure that $\mu_i$ is a most general piece-unifier.

For the third point, note that each atom of $Q'_i$ corresponds to at least one atom of $Pc_i$. Thus if $Pc_i$ is composed of a unique atom, so is $H'_i$ which thus forms a single-piece.  Otherwise, $Pc_i$ is a single-piece from more than one atom; each atom $a$ of $Pc_i$ contains a variable $x$ such that $P_u(x)$ contains an existential variable $y$ which comes from the subset of $H'$ unified by $u$ with $Pc_i$.
Thus the corresponding atom $u_{i-1}^\circ(a)$ in $Q'_i$ is such that $P_u^i(u_{i-1}^\circ(x))$ contains the existential variable $h_i(y)$. So $Q'_i$ forms a single piece.

At the end of the sequence, $Q_n \subseteq u_n^{\circ}(Q \setminus Q')\cup \bigcup_{j \in 1..n}(u_n( \dots u_j(\bod{R_j})))$ and the terms of $P^{n \circ}_u$ are the same as the terms of
$P_u^h$.
Since $P^{n \circ}_u$ is a subpart of $P_u^h$, we can say that $P^{n \circ}_u$ is finer than $P_u^h$ so, there is a substitution $s$ such that $u^h = s \circ u^{\circ}_n$ and $s(u_n^{\circ}(Q \setminus Q')) = u^h(Q \setminus Q')$.
Let $h$ be the substitution obtained by making the union of the inverses of the $h_i$, then $h(u^h(Q \setminus Q')= u(Q \setminus Q')$, so $h\circ s$ is a homomorphism from $u_n^{\circ}(Q \setminus Q')$ to $u(Q \setminus Q')$.
Then we can prove that for all $j$, $1\leq j \leq n$, $h(s(u_n( \dots u_j(\bod{R_j})))) = u(\bod{R})$. 
Indeed, $u_n( \dots u_j(\bod{R_j})) = u_n( \dots u_1(\bod{R_j}))$ since the terms of $\bod{R_j}$ do not appear in $u_i$ ($i < j$).

To conclude the proof, we have $h(s(Q_n)) \subseteq  u(\bod{R}) \cup u(Q \setminus Q') = \beta(Q,\mu, R)$, hence $h\circ s$ is a homomorphism from $Q_n$ to $\beta(Q,\mu, R)$, thus $Q_n \geq \beta(Q,\mu, R)$.
\end{proof}

\paragraph{Theorem \ref{thm-sgp}}
\emph{Given a BCQ $Q$ and a set of rules $\mathcal R$, the set of rewritings of $Q$ obtained by
considering exclusively most general single-piece unifiers is sound and complete. }

\medskip
\begin{proof}
Soundness holds trivially since a single-piece unifier is a piece-unifier. \\
For completeness, thanks to Theorem \ref{theo:sc}, we just have to show by induction on $k$, the length of the rewriting sequence
leading from $Q$ to a $k$-piece-rewriting of $Q$, that: for any
$k$-piece-rewriting $Q^r$ of $Q$, there exists $Q^{s}$ a piece-rewriting of
$Q$ obtained by using exclusively most general single-piece unifiers such that $Q^{s}\geq
Q^r$.\\
For $k=0$ the property is trivially satisfied.\\
For $k\geq 1$, one has $Q^r=\beta(Q^{r'},R,\mu)$, with $Q^{r'}$ being a
piece-rewriting of $Q$ obtained by a piece-rewriting sequence of length $k-1$. By
induction hypothesis, there exists $Q^{s'}$ a piece-rewriting of $Q$ obtained by
using exclusively single-piece unifiers such that $Q^{s'}\geq Q^{r'}$. By Lemma \ref{propCompRewifCompQ}, either $Q^{s'}\geq Q^r$, or there is a piece-unifier $\mu'$ of
$Q^{s'}$ with $R$ such that  $\beta(Q^{s'},R,\mu')\geq Q^r$. In this latter case, thanks
to Property \ref{propMGRifMGSinglePieceRew}, there is a sequence of rewritings of $Q^{s'}$
with $R$ using only single-piece unifiers and leading to a CQ $Q^{s}$ such that
$Q^{s}\geq \beta(Q^{s'},R,\mu')$.
\end{proof}

\end{document}